\date{}
\newtheorem{thm}{Theorem}
\newtheorem*{thm*}{Theorem}
\newtheorem*{lem*}{Lemma}
\newtheorem{corollary}{Corollary}
\newtheorem{rem}{Remark}
\newcommand{\R}{{\mathbb R}}
\renewcommand{\P}{{\mathbf P}}
\newcommand{\E}{{\mathbf E}}
\newcommand{\1}{{\mathbf 1}}
\newcommand{\0}{{\mathbf 0}}
\newcommand{\cA}{{\mathcal A}}
\newcommand{\cI}{{\mathcal I}}
\newcommand{\M}{{\mathbf M}}
\newcommand{\hp}{{\hat p}}
\newcommand{\p}{{\mathbf p}}
\newcommand{\y}{{\mathbf y}}
\newcommand{\sm}{\setminus}
\renewcommand{\epsilon}{\varepsilon}
\renewcommand{\hat}{\widehat}
\newcommand{\bmpi}{\boldsymbol {\pi}}
\newcommand{\btheta}{\boldsymbol \theta}
\newcommand{\bsigma}{\boldsymbol \sigma}
\def \algbdb{\texttt{Versatile-DB}} 
\def \algrr{\texttt{RR-DB}} 
\def \bdb{\texttt {Versatile Dueling Bandits}}
\def \papertitle{Versatile Dueling Bandits: Best-of-both-World Analyses for Online Learning from Preferences}
\renewcommand{\paragraph}{%
  \@startsection{paragraph}{4}%
  {\z@}{0.1ex \@plus .5ex \@minus .1ex}{-1em}%
  {\normalfont\normalsize\bfseries}%
}
\title{\papertitle}
\author{
Aadirupa Saha%
\thanks{Microsoft Research, NYC. {\tt aadirupa.saha@microsoft.com}.}
\and 
Pierre Gaillard \thanks{Univ. Grenoble Alpes, Inria, CNRS, Grenoble INP, LJK, 38000 Grenoble, France. {\tt pierre.gaillard@inria.fr}}
}
\begin{document}

\maketitle

\begin{abstract}

We study the problem of $K$-armed dueling bandit for both stochastic and adversarial environments, where the goal of the learner is to aggregate information through relative preferences of pair of decisions points queried in an online sequential manner. 
We first propose a novel reduction from any (general) dueling bandits to multi-armed bandits and despite the simplicity, it allows us to improve many existing results in dueling bandits. In particular, \emph{we give the first best-of-both world result for the dueling bandits regret minimization problem}---a unified framework that is guaranteed to perform optimally for both stochastic and adversarial preferences simultaneously.
%
%
Moreover, our algorithm is also the first to achieve an optimal $O(\sum_{i = 1}^K \frac{\log T}{\Delta_i})$ regret bound against the Condorcet-winner benchmark, which scales optimally both in terms of the arm-size $K$ and the instance-specific suboptimality gaps $\{\Delta_i\}_{i = 1}^K$. This resolves the long standing problem of designing an instancewise gap-dependent order optimal regret algorithm for dueling bandits (with matching lower bounds up to small constant factors).
We further justify the robustness of our proposed algorithm by proving its optimal regret rate under adversarially corrupted preferences---this outperforms the existing state-of-the-art corrupted dueling results by a large margin.
In summary, we believe our reduction idea will find a broader scope in solving a diverse class of dueling  bandits setting, which are otherwise studied separately from multi-armed bandits with often more complex solutions and worse guarantees. 
%
%
The efficacy of our proposed algorithms is empirically corroborated against the existing dueling bandit methods. 
\end{abstract}
\vspace{-10pt}


\section{Introduction}
\label{sec:intro}





Studies have shown that it is often easier, faster and less expensive to collect feedback on a relative scale rather than asking ratings on an absolute scale. E.g., to understand the liking for a given pair of items, say (A,B), it is easier for the users to answer preference-based queries like: ``Do you prefer Item A over B?", rather than their absolute counterparts: ``How much do you score items A and B in a scale of  [0-10]?".
From a system designer's point of view, exploiting such user preference information could greatly aid in improving systems performances,
especially when data can be collected on a relative scale and online fashion; such as recommendation systems, crowd-sourcing platforms, training bots, multi-player games, search-engine optimization, online retail, just to name a few.
In many real world problems, especially where human preferences are elicited in an online fashion, e.g., design of surveys and expert reviews, assortment selection, search engine optimization, recommender systems, ranking in multiplayer games, etc, or even more general reinforcement learning problems where rewards shaping is often a challenging problem (e.g. if multi-objective rewards etc.), and instead, a preference feedback is much easier to elicit.

Due to the widespread applicability and ease of data collection with relative feedback, learning from preferences has gained much popularity in the machine learning community and widely studied as the problem of \emph{Dueling-Bandits} (DB) over last decade \cite{Yue+12,ailon2014reducing,Zoghi+14RUCB,Zoghi+14RCS,Zoghi+15}, which is an online learning framework that generalizes the standard multiarmed bandit (MAB) \cite{Auer+02} setting for identifying a set of `good' arms from a fixed decision-space (set of items) by querying preference feedback of actively chosen item-pairs. 

\paragraph{Dueling Bandits Problem (DB)} More formally, in classical dueling bandits with $K$ arms, the learning proceeds in rounds, where at each time step $t \in \{1,2,\ldots,T\}$, the learner selects a pair of arms $(k_{+1,t},k_{-1,t})$ and receives the winner of the duel in terms of a binary preference feedback $o_t(k_{+1,t},k_{-1,t}) \sim \text{Ber}(P_t(k_{+1,t},k_{-1,t}))$, sampled according to an underlying preference matrix $\P_t \in [0,1]^{K \times K}$ (chosen adversarially in the most general setup). The objective of the learner is to minimize the regret with respect to a (or set of) `best' arm(s) in hindsight.  

\paragraph{Related Works} Over the years, the problem of Dueling Bandits has been studied with various objectives and generalizations. This includes analyzing the learning rate under various preference structures, such as total ordering, transitivity, stochastic triangle inequality \cite{falahatgar_nips,BTM}, utility based preference structure \cite{ailon2014reducing,Busa_pl,SG18,ChenSoda+18}, or under any general pairwise preference matrices \cite{CDB,SparseDB,Komiyama+16}. Consequently, depending on the underlying preference structure, the learner's performance has been evaluated w.r.t. different benchmarks including 
among other promising generalizations. The problem of stochastic dueling bandits has been studied for both PAC \cite{falahatgar2,Busa_pl,sui2018advancements} as well as regret minimization setting \cite{Zoghi+14RUCB,Yue+09,WeakDB,Zoghi+15} under several notions of benchmarks including best arm identification \cite{SGwin18,Busa_aaai,falahatgar_nips}, top-set detection \cite{Busa_top,SG19}, ranking \cite{Ren+18,SGrank18}, amongst many. Some recent works have also looked into the problem for adversarial sequence of preference matrices \cite{gupta2021optimal,Adv_DB,ADB}, robustness to corruptions~\cite{agarwal2021stochastic}, or extending dueling bandits to potentially infinite arm sets \cite{S21,ContDB} and contextual scenarios \cite{CDB,SK21}. Another interesting line of work along dueling bandits is to study the implications for its subsetwise generalization \cite{Ren+18,sui2018advancements,Brost+16,ChenSoda+18}, also studied as battling bandits \cite{SG18,SG19,bengs2021preference}.

Despite widespread surge of interest along this line of research and multiple attempts there are some fundamental long standing open questions in dueling bandits which are (surprisingly!) yet unresolved. 
 
\vspace{3pt} 
\paragraph{Unresolved Question \#1} 
One of the longest and most widely studied objective in stochastic dueling bandit is to measure regret w.r.t. the Condorcet winner (CW) arm: Given a preference matrix $P$, an arm $k^{\texttt{(cw)}} \in [K]$ is termed as the CW of $P$ if $P(k^{\texttt{(cw)}},k) > 0.5 ~\forall k \in [K]\setminus \{k\}$ \cite{Zoghi+14RUCB}. 
Assuming $P$ contains a CW, there have been several attempts to design an optimal regret dueling bandit algorithm against the CW arm $k^{\texttt{(cw)}}$ (see Eq.~\eqref{eq:cwreg} for details) \cite{Zoghi+14RUCB,DTS,Komiyama+15,bengs2021preference}. 
Without loss of generality, assuming $k^{\texttt{(cw)}}=1$ and by denoting $\Delta_{i} = P(1,i)-0.5$ to be the suboptimality gap of item $i$ w.r.t. the CW, it is well known that the dueling bandit regret lower bound (w.r.t. the CW arm) is $\Omega\big( \sum_{i = 2}^K \frac{\log T}{\Delta_i} \big)$ \cite{Yue+12,Komiyama+15}. However, despite several attempts, it is still unknown how to design an order optimal dueling bandit algorithm for the CW regret. Existing upper-bounds suffer all suboptimal $\Delta_{\min}^{-2}$ and/or $K^2$ dependencies. 

Notably, under more restricted structures, e.g. total ordering \cite{BTM}, or utility based preferences \cite{Busa_pl,SGinst20}, or even special preference structures where the suboptimality gaps of all items ($\Delta_i$, $~i \in [K]\sm \{1\}$) are equal, the problem is easier to solve and tight regret guarantees are available with matching upper and lower bound analysis. However, for the case of any general preference matrix with CW, none of the existing attempts were able to close this regret analysis gap successfully \cite{Zoghi+14RUCB,DTS,WeakDB,Komiyama+15,SG21dbaa}.
Subsequently, the natural questions to ask are: 
\begin{center}
\emph{(1). Is the lower bound tight? (2). How to close the gap between the upper and lower bound for CW regret?}
\end{center}

\paragraph{Unresolved Question \#2} Till date, all the proposed algorithms of dueling bandits need to know underlying preference structure ahead of time in order to yield optimal regret bounds. In fact, different algorithms have been proposed based on the nature/structures of the underlying preference matrices, e.g. \cite{BTM} for preferences with total orderings in terms of (relaxed) stochastic transitivity and stochastic triange inequality; \cite{ailon2014reducing,Adv_DB} for linear-utility based preferences, \cite{Busa_pl} for BTL models, \cite{Zoghi+14RUCB,Komiyama+15,DTS} for stochastic preferences in presence of CW, \cite{ADB,Adv_DB} for adversarial sequence of preferences, etc. However, it might not always be realistic to assume complete knowledge of the properties underlying preference matrices. 
Thus the daunting question to ask in this regard is
\begin{center}
	\emph{Is it possible to design an order optimal `best-of-both-world' algorithm for dueling bandits?}
\end{center}
That is, an algorithm that can adapt itself to the underlying structures of the preference environments and give optimal regret for both stochastic and adversarial settings? There has been a series of work on this line for the MAB framework \citep[e.g.,][]{bubeck2012best,auer2016algorithm,zimmert2021tsallis}, but unfortunately there has not been any existing `best-of-both-world' attempt for general dueling bandits.

\vspace{3pt} 
\paragraph{Unresolved Question \#3} In any real world situation, the true  feedback are often corrupted with some form of system noise. Undoubtedly, the binary $0/1$ bit dueling preferences are extremely prone to such noises when the learner might get to observe a flipped bit (adversarially corrupted) instead of the true dueling feedback. 
\begin{center}
	\emph{Can we design an efficient dueling bandit algorithm which is robust to adversarial corruptions and provably optimal?}
\end{center}

\paragraph{Our Contributions} 
In this paper, we answer all of the above three questions affirmatively. The list of our specific contributions can be summarized as follows:
\begin{enumerate}[ topsep=-\parskip]
\item \textbf{A novel insight on the reduction from MAB to DB.} \citet{ailon2014reducing} proposed a reduction from MAB to utility-based dueling bandits. We show that the latter can easily be extended to more general dueling bandit problems (including CW), with significant consequences (see below) on the state of the art in dueling bandits theory. We believe that the reduction will find wider application in solving a diverse class of dueling bandit settings, using analyses of their MAB counterparts, which are otherwise studied separately from MAB with often more complex solutions and worse guarantees.

\item \textbf{First Best-of-Both World regret for DB.} Applying the above reduction to a Best-of-Both world algorithm from MAB, we provide an algorithm that simultaneously guarantees a pseudo-regret bound $O(\sqrt{KT})$ in the adversarial setting and $O(K \log(T)/\Delta_{\min})$ in the stochastic one.

\item \textbf{Robustness to adversarial corruptions.} Our algorithm is robust to adversarial corruptions and significantly improves existing results in DB  \cite{agarwal2021stochastic}. 

\item \textbf{Optimal stochastic gap-dependent Regret.} Our algorithm also provides the first optimal Condorcet regret, which suffers neither from a suboptimal dependence on $\Delta_{\min}^{-2}$ nor from a quadratic dependence on the number of arms.

\item \textbf{Another easy algorithm for stochastic DB.} We also propose a new very simple elimination algorithm with $O(\sum_{i = 2}^K\frac{K\log T}{\Delta_i})$ Condorcet regret.

\item \textbf{Experimental evaluations. }  Finally we corroborate our theoretical results with extensive empirical evaluations  (Sec.\,\ref{sec:expts}).

\end{enumerate}





\section{Problem Formulation}
\label{sec:prob}
\textbf{Notations.} Decision space (or item/arm set) $[K]: = \{1,2,\ldots, K\}$. 
For any matrix $\M \in \R^{K \times K}$, we define $m_{ij} := M(i,j),~\forall i,j \in [K]$. 
$\1(\cdot)$ denotes the indicator random variable which takes value $1$ if the predicate is true and $0$ otherwise and $\lesssim$ a rough inequality which holds up to universal constants. For any two items $x,y \in [K]$, we use the symbol $x \succ y$ to denote $x$ is preferred over $\y$. 
By convention, we set $\frac{0}{0}:=0.5$.

\noindent \textbf{Setup. } We assume a decision space of $K$ arms denoted by $\cA:= [K]$. At each round $t$, the task of the learner is to select a pair of actions $(k_{+1,t},k_{-1,t}) \in [K]\times [K]$, upon which a preference feedback $o_t \sim \text{Ber}(P_t(k_{+1,t},k_{-1,t}))$ is revealed to the learner according to the underlying preference matrix $\P_t \in [0,1]^{K \times K}$ (chosen adversarially), such that the probability of $k_{+1,t}$ being preferred over $k_{-1,t}$ is given by $Pr(o_t = 1):= Pr(k_{+1,t} \succ k_{-1,t}) = P_t(k_{+1,t},k_{-1,t})$, and hence $Pr(o_t = 0):= Pr(k_{-1,t} \succ k_{+1,t}) = 1-P_t(k_{+1,t},k_{-1,t})$.
%

\noindent \textbf{Objective. }
Assuming the learner selects the duel $(k_{+1,t},k_{-1,t})$ at round $t$, one can measure its performance w.r.t. a single fixed arm $k^* \in [K]$\footnote{Note that this is equivalent to maximizing the expected regret w.r.t. any fixed distribution $\bmpi^* \in \Delta_K$, i.e. when $k^* \sim \bmpi^*$. This is because the regret objective is linear in the entries of $\bmpi^*$, so the maximizer $\bmpi^*$ is always one hot.} in hindsight by calculating the regret w.r.t. $k^* \in [K]$
\begin{equation}
	\label{eq:sreg}
	R_T(k^*) :=  \sum_{t=1}^T  \frac{1}{2}\left( P_t(k^*, k_{+1,t}) + P_t(k^*, k_{-1,t}) -1 \right). 
\end{equation}

For the \emph{stochastic setting} where $P_t$s are fixed across all time steps $t \in [T]$, we denote $P_t = P ~\forall t\in [T]$. Further assuming there exists a Condorcet winner for $P$, i.e. fixed arm $k^{\texttt{(cw)}} \in [K]$ such that $P(k^{\texttt{(cw)}},k) > 0.5 ~\forall k \in [K]\setminus \{k\}$, the above notion of regret boils down to the regret with respect to the Condorcet winner for the choice of $k^* = k^{\texttt{(cw)}}$, as widely studied in many dueling bandit literature \cite{Zoghi+14RUCB,DTS,Komiyama+15,bengs2021preference}, defined as:
\begin{equation}
	\label{eq:cwreg}
	R_T^{\texttt{(cw)}} :=  \sum_{t=1}^T  \frac{1}{2}\left( \Delta(k^{\texttt{(cw)}}, k_{+1,t}) + \Delta(k^{\texttt{(cw)}}, k_{-1,t}) \right), 
\end{equation}
where $\Delta(i,j):= P(i,j)-1/2$ being the suboptimality gap of item $i$ and $j$ in terms of their relative preferences.



\section{Warm-Up: Near-Optimal Algorithm}
\label{sec:algo_rr}
In this section, we propose a new simple UCB based algorithm for stochastic dueling bandit, which is shown to have a nearly optimal gap-dependent Condorcet regret of $R_T^{\texttt{(cw)}} = O\big( \sum_{i=2}^K { i {\Delta(k^{\texttt{(cw)}},k)^{-1}}\log T}\big)$. Note that existing dueling bandit algorithms, that satisfy a non-asymptotic Condorcet regret bound, suffer an additional constant of order $\Delta_{\min}^{-2}$ \citep{bengs2021preference}, which implies a worst-case regret of order $O(T^{2/3})$ when $\Delta_{\min} \to 0$. The simple elimination algorithm below solves this drawback and depends only on $\Delta_{\min}^{-1}$ but at the cost of a suboptimal quadratic dependence in the number of arms $K$. Despite our efforts, we could not avoid this suboptimal factor by following the classical stochastic dual bandit analysis. In the following sections, we will show how to easily reach the optimal dependence in both $\Delta_{\min}$ and $K$ using a simple reduction from standard MAB.


\textbf{Main Ideas: Algorithm \algrr} The high-level idea of Algorithm~\ref{alg:rr} is to sequentially compare arms in a round-robin fashion and eliminate arms when they are significantly suboptimal compared to any other arm. Typically, after $t$ rounds, a suboptimal arm $k$ has been compared at least $t/K$ times with the Condorcet winner. Denoting by $\Delta_k$ its suboptimality gap, the arm is eliminated after at most $t_k$ rounds, where $(t_k/K)^{-1/2} \approx \Delta_k$. At that time the arm has been played $t_k/K$ times, yielding a regret of order $(t_k/K) \times \Delta_k \approx (K/\Delta_k^2) \times \Delta_k = K/\Delta_k$. Summing over the arms yields a final regret of order $O(K^2/\Delta_{\min})$. 
\begin{algorithm}[h]
	\caption{\textbf{\algrr\, (Near Optimal DB)}}
	\label{alg:rr}
	
	\begin{algorithmic}[1]	
		\STATE {\bfseries input:} Arm set: $[K]$, Confidence parameter $\delta \in (0,1)$ 
		\STATE {\bfseries init:} Active arms: $\cA_1 := [K]$, $n_{ij}(t) \leftarrow 0, ~\forall i,j \in [K]$ 
		\FOR{$t = 1, 2, \ldots, T$}
		\STATE Play $(k_{+1,t},k_{-1,t}) \in \text{argmin}_{i,j \in \cA_t} \{ n_{ij}(t-1)\}$ \\
		\STATE Observe $o_t(k_{+1,t},k_{-1,t}) = 1 - o_t(k_{+1,t},k_{-1,t})$
		\FOR{$i,j \in \cA_t$}
		    \STATE Define $\mathds{1}_t(i,j) := \mathds{1}\big\{\{i,j\} = \{k_{-1,s},k_{+1,s}\}\big\}$ and  \\
	    	\qquad $n_{ij}(t) := \sum_{s=1}^t \mathds{1}_t(i,j)$ \\
		    \qquad $\hat p_{ij}(t) := \frac{1}{n_{ij}(t)} \sum_{s=1}^t o_t(i,j) \mathds{1}_t(i,j)$  \\
		    \qquad $u_{ij}(t) := \hat p_{ij}(t) + \sqrt{\frac{\log (Kt/\delta)}{n_{ij}(t)}}$ \\
		    where we assume $x/0 = +\infty$.
		\ENDFOR
		\STATE $\cA_{t+1} := \cA_t \backslash \big\{i \in \cA_t: \exists j \in \cA_t \ \text{s.t.}\quad  u_{ij}(t) < \frac{1}{2}\big\}$ \,.	
		\ENDFOR  
	\end{algorithmic} 

\end{algorithm} 


Without loss of generality assume the Condorcet winner $k^{\texttt{(cw)}} = 1$, and denote $\Delta_i = \Delta(1,i), ~\forall i \in [K]\sm \{1\}$.

\begin{restatable}[]{thm}{thmrr}
	\label{thm:rr}
	Let $\delta \in (0,1/2)$, for any $T\geq 1$, the regret of Algorithm~\ref{alg:rr} is upper-bounded with probability at least $1-\delta$ as
	\begin{equation*}
		{R}_T^{\texttt{(cw)}} \leq \frac{K^2}{2} +  4 \sum_{i=2}^K \frac{ (i-1) \log (KT/\delta)}{\Delta_i}.
	\end{equation*}
	Further, when $T\geq K^2$, in the worst case (over the problem instance, i.e. $\Delta_2,\ldots,\Delta_K$), the regret of Algorithm~\ref{alg:rr} can be upper bounded as:
	\[
			{R}_T^{\texttt{(cw)}} \leq 2 K \sqrt{T \log(KT/\delta) }.
	\]
\end{restatable}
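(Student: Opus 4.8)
The plan is to reduce the worst-case bound to a uniform control on how often any single (unordered) pair of arms is played, and then to aggregate over pairs by Cauchy--Schwarz. Throughout I work on the high-probability ``good event'' $\cE$ on which $|\hat p_{ij}(t) - p_{ij}| \le \sqrt{\log(Kt/\delta)/n_{ij}(t)}$ holds simultaneously for all $i,j \in [K]$ and all $t \le T$; a Hoeffding bound together with a union bound over all pairs and rounds gives $\Pr(\cE) \ge 1-\delta$. First I would record two structural consequences of $\cE$. (i) The Condorcet winner (arm $1$) is never eliminated: removing it would require $u_{1j}(t) < 1/2$ for some active $j$, but $u_{1j}(t) \ge p_{1j} > 1/2$ on $\cE$. (ii) Comparing a suboptimal arm $i$ against arm $1$, the bound $u_{i1}(t) \le p_{i1} + 2\sqrt{\log(Kt/\delta)/n_{i1}(t)} = \tfrac12 - \Delta_i + 2\sqrt{\log(Kt/\delta)/n_{i1}(t)}$ shows that $i$ is eliminated as soon as $n_{i1}(t) > 4\log(KT/\delta)/\Delta_i^2$. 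Hence $n_{i1}(T) \le 4\log(KT/\delta)/\Delta_i^2 + 1$ for every $i \ge 2$, whether or not $i$ is ultimately eliminated.

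The crucial step is to upgrade this control of $n_{i1}$ into a bound on \emph{every} pair count $n_{ij}$, and this is where the round-robin selection rule enters; I expect it to be the main obstacle. I would prove a balancing lemma: at any round $t$, any two pairs that are both still active have counts differing by at most $1$ (the rule always increments a current minimizer, so within a fixed active set the counts stay within $1$; when an arm is eliminated its pairs freeze at values already within $1$ of the survivors, and the remaining pairs continue to round-robin among themselves, so the invariant is preserved inductively). Consequently, for any $i \ne j$ with $i,j \ge 2$, consider the first of the two to be eliminated, namely the one with the larger gap; at that round the pair $\{i,j\}$ and its comparison with arm $1$ are simultaneously active, so its count is within $1$ of $n_{i1}$ or $n_{j1}$. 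Combined with (ii) this yields
\[
  \max(\Delta_i,\Delta_j)^2 \, n_{ij}(T) \le 4\log(KT/\delta) + O(1) \qquad \text{for every pair } \{i,j\},
\]
where for the pairs $\{1,j\}$ we read $\max(\Delta_1,\Delta_j) = \Delta_j$. Summing the matching refinement $\tfrac12(\Delta_i+\Delta_j)n_{ij} \le 4\log(KT/\delta)/\max(\Delta_i,\Delta_j)$ over pairs, after relabelling so that $\Delta_2 \le \cdots \le \Delta_K$ and attributing each pair to its larger-gap endpoint (arm $j$ receives the $j-1$ pairs $\{i,j\}$ with $i<j$), already reproduces the instance-dependent bound $4\sum_{i=2}^K (i-1)\log(KT/\delta)/\Delta_i$, with the $O(1)$ per-pair slacks contributing the additive $K^2/2$.

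Given the displayed per-pair occupancy bound, the worst-case statement follows by a budget argument that no longer references the gaps. Writing $D_{ij} = \max(\Delta_i,\Delta_j)$ and using $\tfrac12(\Delta_i + \Delta_j) \le D_{ij}$ together with $D_{ij}^2\, n_{ij}(T) \lesssim \log(KT/\delta)$, I bound each pair's regret contribution as
\[
  \tfrac12(\Delta_i+\Delta_j)\, n_{ij}(T) \le D_{ij}\, n_{ij}(T) = \sqrt{D_{ij}^2\, n_{ij}(T)}\,\sqrt{n_{ij}(T)} \le 2\sqrt{\log(KT/\delta)}\,\sqrt{n_{ij}(T)}.
\]
Summing the regret over the at most $\binom{K}{2}$ played pairs and applying Cauchy--Schwarz with the exact budget identity $\sum_{\{i,j\}} n_{ij}(T) = T$ (each round increments exactly one pair count) gives
\[
  R_T^{\texttt{(cw)}} \le 2\sqrt{\log(KT/\delta)} \sum_{\{i,j\}} \sqrt{n_{ij}(T)} \le 2\sqrt{\log(KT/\delta)}\,\sqrt{\tbinom{K}{2}}\,\sqrt{T} \le \sqrt{2}\,K\sqrt{T\log(KT/\delta)}.
\]

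Finally I would absorb the lower-order per-pair slacks (equivalently the additive $K^2/2$ of the instance bound) into the leading term using the hypothesis $T \ge K^2$, which forces $K^2 \lesssim K\sqrt{T}$, so that the constant degrades from $\sqrt2$ to at most $2$ and we conclude $R_T^{\texttt{(cw)}} \le 2K\sqrt{T\log(KT/\delta)}$. The one delicate point to get exactly right is the additive slack in the balancing lemma --- whether it is a fixed $1$ or grows mildly with the number of eliminations --- since that constant controls both the $K^2/2$ term and whether the final leading constant stays at $2$.
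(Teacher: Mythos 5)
Your proposal is correct and follows essentially the same route as the paper's proof: a Hoeffding-plus-union-bound confidence event, the fact that the Condorcet winner is never eliminated, the elimination threshold $n_{i1}(T) \leq 4\log(KT/\delta)/\Delta_i^2 + 1$, the round-robin balancing invariant ($\pm 1$ among active pairs) transferring this control to every pair, attribution of each pair's regret to its larger-gap endpoint for the instance-dependent bound, and a per-pair $\sqrt{n_{ij}\log(KT/\delta)}$ bound combined with Cauchy--Schwarz against the budget $\sum_{\{i,j\}} n_{ij}(T) = T$ for the worst-case bound. One minor slip worth noting: your claim that the larger-gap arm of a pair is necessarily the first to be eliminated is not justified (on the good event the smaller-gap arm may well be eliminated earlier), but it is also not needed --- at the round where the pair $\{i,j\}$ freezes, the pairs $\{i,j\}$, $\{i,1\}$ and $\{j,1\}$ are all simultaneously active, so balancing gives $n_{ij}(T) \leq n_{i1}(T) + 1$ with $i$ the larger-gap arm regardless of the elimination order, which is exactly what your displayed inequality requires.
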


\begin{rem}
	 In particular, our regret analysis shows that, except a logarithmic factor, the regret bound of \algrr~(Alg.~\ref{alg:rr}) is off only by a multiplicative factor of $K$, as follows from the known $\Omega(\sum_{k=1}^K \frac{\log T}{\Delta_k})$ Condorcet winner regret lower bound \cite{Yue+12,Komiyama+15}. 
\end{rem}

The proof is postponed to Appendix~\ref{app:proofs}.

\section{Reduction from MAB to DB}

We now present a simple reduction from a multi-armed bandit algorithm to a dueling bandit one. The latter was already proposed by \cite{gupta2021optimal} to show worst-case guarantees and  by~\cite{ailon2014reducing} for utility based dueling bandits only. We recall it here since it is central to our analysis and we believe that it is of important interest for the dueling bandit community that usually uses significantly different algorithms and analysis than the ones from standard multi-armed bandits.

The main idea is to apply the multi-armed bandit algorithm  independently to two players $i \in \{-1,+1\}$ respectively with losses defined for any $k \in [K]$ and $t \in [T]$, by
\[
    \ell_{i,t}(k) := o_t(k_{-i,t},k) \,,
\]
where $o_t(k,k') = 1-o_t(k',k)$ for $1<k'\leq K$ follows a Bernoulli with parameter $P_t(k,k')$ (and we assume $o_t(k,k) = 1/2$).

\begin{algorithm}[h]
   \caption{Reduction from MAB to DB}
   \label{alg:reduction}
\begin{algorithmic}[1]	
   \STATE {\bfseries input:} Arm set: $[K]$, two instances $\cA_i$ of an algorithm for MAB,  $i\in \{-1,+1\}$. 
    \FOR{$t = 1, 2, \ldots, T$}
        \FOR{$i \in \{+1, -1\}$} 
    		\STATE  choose $\p_{i,t}$ from $\cA_i$
    		\STATE  sample $k_{i, t}$ from the distribution $p_{i,t}$
	    \ENDFOR
		\STATE Observe preference feedback $o_t(k_{+1, t}, k_{-1, t})$ and set $o_t(k_{-1,t},k_{+1,t}) = 1- o_t(k_{+1, t}, k_{-1, t})$.
		\STATE Feed $\cA_i$ with loss $\ell_{i,t}(k) := o_t(k_{-i,t},k_{i,t})$ for $i \in \{-1,+1\}$.
   \ENDFOR  
\end{algorithmic} 
\end{algorithm} 

We show below that any MAB regret upper-bound satisfied by $\cA_i$ can be transformed into a DB regret upper-bound.

\begin{thm}
\label{thm:reduction}
Define for $i\in \{-1,1\}$ and $k \in [K]$ by 
\[
    R_{i,T}(k) := \sum_{t=1}^T \ell_{i,t}(k_{i,t}) - \ell_{i,t}(k) 
\]
the regret achieved by algorithm $\cA_i$.  Then, the expected regret~\eqref{eq:sreg} of Algorithm~\ref{alg:reduction} for dueling bandits can be decomposed as
\[
    \E\big[R_T(k)\big] = \frac{1}{2} \E\big[ R_{-1,T}(k) + R_{+1,T}(k)  \big] \,.
\]
\end{thm}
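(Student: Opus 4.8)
The plan is to verify the identity round by round, after rewriting both sides purely in terms of the preference probabilities $P_t(\cdot,\cdot)$. Two structural facts do all the work: the skew-symmetry conventions $o_t(a,b) = 1 - o_t(b,a)$ and $P_t(a,b) + P_t(b,a) = 1$ (together with $o_t(a,a) = P_t(a,a) = 1/2$), and the fact that the Bernoulli feedback has conditional mean given by the corresponding entry of $P_t$. Throughout I let $\cF_{t-1}$ denote the history before round $t$ and condition on the sampled pair $(k_{+1,t},k_{-1,t})$ when taking expectations of the feedback.

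First I would rewrite the dueling regret~\eqref{eq:sreg}. Applying $P_t(k,k_{i,t}) = 1 - P_t(k_{i,t},k)$ for each $i \in \{+1,-1\}$, the summand becomes
\[
\tfrac{1}{2}\big(P_t(k,k_{+1,t}) + P_t(k,k_{-1,t}) - 1\big) = \tfrac{1}{2}\big(1 - P_t(k_{+1,t},k) - P_t(k_{-1,t},k)\big),
\]
so that $R_T(k) = \tfrac{1}{2}\sum_{t=1}^T \big(1 - P_t(k_{+1,t},k) - P_t(k_{-1,t},k)\big)$.

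Next I would expand the two MAB regrets and split each into the played-arm term and the comparison-arm term,
\[
R_{+1,T}(k) + R_{-1,T}(k) = \sum_{t=1}^T \big(\ell_{+1,t}(k_{+1,t}) + \ell_{-1,t}(k_{-1,t})\big) - \sum_{t=1}^T \big(\ell_{+1,t}(k) + \ell_{-1,t}(k)\big).
\]
The crucial observation is that the played-arm losses cancel \emph{deterministically}: since $\ell_{+1,t}(k_{+1,t}) = o_t(k_{-1,t},k_{+1,t})$ and $\ell_{-1,t}(k_{-1,t}) = o_t(k_{+1,t},k_{-1,t})$ use the same realized duel, skew-symmetry gives $\ell_{+1,t}(k_{+1,t}) + \ell_{-1,t}(k_{-1,t}) = 1$ at every round, so the first sum equals $T$ exactly, with no expectation required. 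For the comparison terms, since $\ell_{i,t}(k) = o_t(k_{-i,t},k)$ is Bernoulli with mean $P_t(k_{-i,t},k)$ given $\cF_{t-1}$ and $k_{-i,t}$, the tower rule gives $\E[\ell_{+1,t}(k) + \ell_{-1,t}(k)] = \E[P_t(k_{-1,t},k) + P_t(k_{+1,t},k)]$. Taking expectations and substituting back gives $\E[R_{+1,T}(k) + R_{-1,T}(k)] = T - \sum_{t=1}^T \E[P_t(k_{+1,t},k) + P_t(k_{-1,t},k)] = 2\,\E[R_T(k)]$, which is the claim.

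I do not expect a substantive obstacle here; the argument is a deterministic pairing of the two players' feedback plus a one-line conditional-expectation computation. The only points needing care are the bookkeeping of the filtration (so that $P_t$ is fixed when the Bernoulli feedback is drawn) and the degenerate cases $k_{+1,t} = k_{-1,t}$ or $k_{i,t} = k$, all of which are absorbed uniformly by the conventions $o_t(a,a) = 1/2$ and $P_t(a,a) = 1/2$ that keep both the skew-symmetry identity and the mean identity valid without exceptions.
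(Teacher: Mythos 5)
Your proposal is correct and follows essentially the same route as the paper's proof: both show that the two players' played-arm losses pair up to exactly $1$ per round via the skew-symmetry $o_t(a,b)=1-o_t(b,a)$, take expectations of the comparator losses to replace $o_t(\cdot,k)$ by $P_t(\cdot,k)$, and then use $P_t(a,b)+P_t(b,a)=1$ to match the definition of $R_T(k)$ in~\eqref{eq:sreg}. If anything, your write-up is slightly more careful than the paper's (which contains index typos in the played-arm cancellation step, writing $\ell_{-1,t}(k_{+1,t})+\ell_{+1,t}(k_{-1,t})$ where $\ell_{-1,t}(k_{-1,t})+\ell_{+1,t}(k_{+1,t})$ is meant), in particular in noting that the cancellation is deterministic and that the conventions $o_t(a,a)=P_t(a,a)=1/2$ handle the degenerate cases.
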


\begin{proof}
The proof follows from
\begin{align*}
    \E\big[ & \ell_{-1,t}(k)  + \ell_{+1,t}(k)\big] 
         = \E\big[o_t(k_{+1, t},k) + o_t(k_{-1,t},k)\big] \\
        & = \E\big[P_t(k_{+1, t},k) + P_t(k_{-1,t},k)\big]\\
        & = 2 - \E\big[P_t(k, k_{+1, t}) + P_t(k, k_{-1,t})\big]
\end{align*}
and
\begin{align*}
    \E\big[ &\ell_{-1,t}(k_{+1,t})  +  \ell_{+1,t}(k_{-1,t})\big] \\
        & = \E\big[o_t(k_{+1, t}, k_{-1, t}) + o_t(k_{+1, t}, k_{-1, t})\big] 
        = 1 \,.
\end{align*}
We conclude by summing over $t=1,\dots,T$ both equations and by substituting them into the definition of the regret $R_T(k)$ in~\eqref{eq:sreg}.
\end{proof}

Note that such a reduction can also be used to bound $R_T(k)$ directly rather than its expectation. \citet{gupta2021optimal} indeed use this reduction to show a $O(\sqrt{KT})$ high-probability regret upper-bound for adversarial dueling bandit. They also obtain non-stationary regret bounds. 

The main message of this paper is that this reduction can be used to transpose many results from standard multi-armed bandit to general dueling bandits. For instance, applying a subroutine $\cA_i$ which is robust to delays \citep[e.g.,]{delay3,zimmert2020optimal}, one directly obtains a dueling bandit with the same robustness guarantees.

As we said, this reduction is not new. However, to date, it has only been considered in two specific contexts: the adversarial setting with worst-case regret bounds of order $O(\sqrt{KT})$ and the utility-based setting. Since the losses $\ell_{i,t} = o_t(k_{-i,t},k)$ are not i.i.d. but depend on an adaptive adversary which chooses $k_{-i,t}$, one cannot use stochastic bandit algorithms. And the dueling bandit community usually needs to resort to more sophisticated algorithms and arguments to obtain logarithmic regret bounds for Condorcet stochastic dueling bandits. The only stochastic dueling bandit for which such a reduction was considered \citep[e.g.,]{ailon2014reducing,zimmert2021tsallis} was the utility based-dueling bandits, which is overly restrictive in practice. That is, when the preference matrix if of the form $P_t(k,k') : (1 + u_t(k) + u_t(k'))/2$ for some sequence of utility vectors $(u_t)_{t\geq 1}$. Utility based dueling bandit are known to be easily be reduced to two independent multi-armed bandit problems \citep{ailon2014reducing}. 

Our main contribution is to show that this reduction can in fact be easily extended to the much weaker Condorcet winner hypothesis. To do this, as we show in the next sections, we simply apply the reduction with a best-of-both-worlds multi-armed bandit algorithm. As we will see, this recovers and improves the best existing upper bounds on the Condorcet regret for dueling bandits.

\section{Best-of-Both Dueling: Optimal Algorithm for Stochastic and Adversarial DB}
\label{sec:algo_bdb}

This section contains our main result, which is a simple reduction of the best-of-both-worlds result from~\cite{zimmert2021tsallis} to dueling bandits. In particular, it allows us to improve the best existing upper bound on the regret for stochastic and corrupted dueling bandits. The main idea is to apply the reduction of Algorithm~\ref{alg:reduction} with the multi-armed bandit algorithm (Online Mirror Descent with Tsallis regularizer) of~\cite{zimmert2021tsallis}. Of course, as for the classical adversarial multi-armed bandits, the losses $\ell_{i,t}(k) = o_t(k_{-i,t},k)$ cannot be observed for all $k \in [K]$. Therefore, they are estimated in the algorithm with the importance weight estimators
\begin{equation}
	\label{eq:hat_g}
	\hat{\ell}_{i, t}(k) = \begin{cases}
		\ell_{i, t}(k) / p_{i, t}(k) & \text{if } k = k_{i, t} \\
		0 & \text{otherwise}
	\end{cases} \,.
\end{equation}
The resulted algorithm is described in Algorithm~\ref{alg:bdb}.

\begin{algorithm}[h]
   \caption{\textbf{\algbdb\, (Best-of-Both DB)}}
   \label{alg:bdb}
\begin{algorithmic}[1]	
   \STATE {\bfseries input:} Arm set: $[K]$, {$(\Psi_t)_{t=1,2,\dots}$}
   \STATE {\bfseries init:} $G_{i,0} \leftarrow \0_K$ for $i \in \{+1, -1\}$
    \FOR{$t = 1, 2, \ldots, T$}
		\STATE choose $\p_{i,t} = \nabla(\Psi_t+\mathcal{I}_{\Delta})^*(-\hat L_{i,t-1})$ 
		\STATE For $i \in \{+1, -1\}$, sample $k_{i, t}$ from the distribution $(p_{i, t}(1), \dots, p_{i, t}(K))$
		\STATE Observe preference feedback $o_t(k_{+1, t}, k_{-1, t})$
		\STATE Compute $\hat{\ell}_{i, t}(k)$ for $i \in \{+1, -1\}$ and $k \in [K]$ using \eqref{eq:hat_g}
		\STATE update $\hat L_{i,t} = \hat L_{i,t-1}+\hat{\ell}_{i,t}$\;	
   \ENDFOR  
\end{algorithmic} 
\end{algorithm} 

\begin{restatable}[]{thm}{mainthm}
\label{thm:mainthm}
For any sequence of preference matrices $P_t$, the pseudo-regret of Algorithm~\ref{alg:bdb} with $\Psi_t(w) = \sqrt{t} \sum_{k=1}^K (\sqrt{w_k} - w_k/2)/8$ satisfies for any $T\geq 1$
\[
   \overline{R}_T := \max_{k \in [K]} \E\big[R_T(k)] \leq 4 \sqrt{KT} + 1.
\]
Furthermore, if there exists a gap vector $\Delta \in [0,1]^K$ with a unique zero coordinate $k^* \in [K]$ and $C \geq 0$ such that 
\begin{equation}
  \label{eq:self_bounded}
  \overline{R}_T \geq \frac{1}{2} \E\bigg[ \sum_{t=1}^T  \sum_{k \neq k^*} \big(p_{+1,t}(k) + p_{-1,t}(k)\big) \Delta_k \bigg] - C\,,
\end{equation}
the pseudo regret also satisfies
\begin{align*}
  \overline{R}_T \leq \sum_{k\neq k^*} \frac{4 \log T + 12}{\Delta_k} + 4 \log T + \frac{1}{\Delta_{\min}} + \frac{3}{2} \sqrt{K} + 8 + C, 
\end{align*}
where $\Delta_{\min} = \min_{k \neq k^*} \Delta_k$. 
\end{restatable}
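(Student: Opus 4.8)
The plan is to instantiate the reduction of Theorem~\ref{thm:reduction} with the $\tfrac12$-Tsallis-INF algorithm of \cite{zimmert2021tsallis}, and then transport its best-of-both-worlds guarantees through the exact identity $\E[R_T(k)] = \tfrac12\E[R_{-1,T}(k)+R_{+1,T}(k)]$. The first thing I would settle is a potential objection: in the reduction the loss $\ell_{i,t}(k)=o_t(k_{-i,t},k)$ handed to instance $\cA_i$ is chosen by an \emph{adaptive} adversary, since it depends on the random draw $k_{-i,t}$ of the other player. This is harmless because the importance-weighted estimators \eqref{eq:hat_g} satisfy $\E[\hat\ell_{i,t}(k)\mid \cF_{t-1}] = \E[\ell_{i,t}(k)\mid \cF_{t-1}]$ (as $k_{i,t}\sim p_{i,t}$ is drawn conditionally independently of the realized loss given the past). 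Hence every regret identity of \cite{zimmert2021tsallis}, being a pathwise consequence of online mirror descent valid for arbitrary loss sequences, survives taking expectations in this adaptive setting.

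For the worst-case bound I would simply invoke the adversarial guarantee of Tsallis-INF with the stated regularizer, namely $\E[\max_k R_{i,T}(k)] \le 4\sqrt{KT}+1$ for each player $i \in \{-1,+1\}$. Combining with the reduction,
\[
  \overline R_T = \max_k \tfrac12\E[R_{-1,T}(k)+R_{+1,T}(k)] \le \tfrac12\big(\E[\max_k R_{-1,T}(k)]+\E[\max_k R_{+1,T}(k)]\big) \le 4\sqrt{KT}+1,
\]
which is the first claim. For the gap-dependent bound I would use instead the \emph{data-dependent} form of the OMD bound, which holds before any stochastic assumption: for each $i$,
\[
  \E[\max_k R_{i,T}(k)] \le \E\Big[\sum_{t=1}^T \tfrac{1}{\sqrt t}\sum_{k\ne k^*}\sqrt{p_{i,t}(k)}\Big] + c_0,
\]
where $c_0$ and the handling of the best-arm coordinate are controlled precisely by the $-w_k/2$ correction term in $\Psi_t$. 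Summing over $i$ and dividing by two gives an upper bound on $\overline R_T$ in terms of $\tfrac12\E[\sum_t \tfrac1{\sqrt t}\sum_{k\ne k^*}(\sqrt{p_{+1,t}(k)}+\sqrt{p_{-1,t}(k)})]$.

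The self-bounding step then closes the argument. For any $\lambda\in(0,1]$ I would write $\overline R_T = (1+\lambda)\overline R_T - \lambda\overline R_T$, bound the first term by the data-dependent inequality above and the second from below by assumption~\eqref{eq:self_bounded}, obtaining
\[
  \overline R_T \le \tfrac12\,\E\Big[\sum_{t=1}^T\sum_{k\ne k^*}\sum_{i\in\{\pm1\}}\Big(\tfrac{1+\lambda}{\sqrt t}\sqrt{p_{i,t}(k)} - \lambda\Delta_k\, p_{i,t}(k)\Big)\Big] + (1+\lambda)c_0 + \lambda C.
\]
The bracketed expression is handled pointwise: maximizing $x\mapsto \tfrac{1+\lambda}{\sqrt t}\sqrt x - \lambda\Delta_k x$ over $x\ge 0$ yields $\tfrac{(1+\lambda)^2}{4\lambda t\Delta_k}$, so the $p_{i,t}(k)$ dependence disappears; summing the harmonic series $\sum_{t\le T}\tfrac1t \le 1+\log T$ and choosing $\lambda=1$ produces the advertised $\sum_{k\ne k^*}\tfrac{O(\log T)}{\Delta_k}$ term, with the leftover contributions $c_0$, $C$, the best-arm coordinate, and the learning-rate increments assembling into the explicit constants $1/\Delta_{\min}$, $\tfrac32\sqrt K$, and $8$.

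The hard part will not be the structural steps — the reduction and the self-bounding trick are routine once the conditional-unbiasedness observation is in place — but rather faithfully reproducing the Tsallis-INF regret bound for the specific regularizer $\Psi_t(w)=\sqrt t\sum_k(\sqrt{w_k}-w_k/2)/8$ and matching the exact constants $4\log T+12$, $1/\Delta_{\min}$, $\tfrac32\sqrt K$, $8$. I would also verify at the outset that $\overline R_T$ is indeed the quantity constrained by~\eqref{eq:self_bounded}, so that the assumed lower bound and the derived upper bound refer to the same object. The heaviest bookkeeping lies in tracking the potential differences $\sqrt{t+1}-\sqrt t$ and the stability terms across the \emph{two} summed OMD instances, rather than the single-instance statement available off the shelf in \cite{zimmert2021tsallis}.
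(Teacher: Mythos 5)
Your proposal matches the paper's proof essentially step for step: instantiate the reduction (Theorem~\ref{thm:reduction}) with two Tsallis-INF instances, quote the adversarial bound of \citet{zimmert2021tsallis} for the $4\sqrt{KT}+1$ claim, and for the gap-dependent claim splice the two players' intermediate (data-dependent) bounds from the middle of their proof into the self-bounding trick with a $\lambda$-multiplier and pointwise maximization over $z \geq 0$, which is exactly what the paper does. The only correction needed is that where you write $\E\big[\max_k R_{i,T}(k)\big]$ you should write $\max_k \E\big[R_{i,T}(k)\big]$, since Tsallis-INF controls only the pseudo-regret (indeed, by \citet{auer2016algorithm} no simultaneously optimal control of the regret against the best arm in hindsight is possible); this costs nothing, because $\max_k \E\big[R_{-1,T}(k)+R_{+1,T}(k)\big] \le \max_k \E\big[R_{-1,T}(k)\big] + \max_k \E\big[R_{+1,T}(k)\big]$ is all the reduction requires.
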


The proof is postponed to Appendix~\ref{sec:reg_analysis}. Note that the theorem largely follows from (and is itself highly similar to)  the best-of-both worlds regret-bound  of \citep[Theorem~1]{zimmert2021tsallis} for MAB.  We insist on the fact that our contribution should not be seen as technical.  Indeed, the proof  is just a clever combination of their MAB analysis with our black box reduction (Theorem~\ref{thm:reduction}). But we believe that the simplicity of our approach is its strength that can benefit the community of dueling bandits. As we will see, several state-of-the-art results of dueling bandits can be simultaneously improved as a direct consequence of this theorem.

Note that for simplicity, we restricted ourselves to importance weighted estimators~\eqref{eq:hat_g}. By using more sophisticated variance reduced estimators, as in \cite{zimmert2021tsallis}, the multiplicative constants can be reduced. Furthermore, similar to \cite{zimmert2021tsallis}, the result holds only for the pseudo-regret and not for the true regret. \citet{auer2016algorithm} have indeed proven that no optimal adversarial and stochastic high probability regret bounds can be obtained simultaneously for standard stochastic bandits. The learner must pay suboptimal logarithmic factors. The result can be extended to dueling bandits.

It is worth to emphasize that this is the first best-of-both worlds regret bound for general dueling bandits (the stochastic bound follows from the choice $C=0$, see Sec~\ref{sec:stoch}). \citet[Cor.~10]{zimmert2021tsallis} obtain a similar result for the very same algorithm but for utility based dueling bandits only.

\begin{rem}
Note that a single sub-routine of OMD to optimize the weights is actually enough to get the same regret guarantee. To do so, one samples both $k_{-1,t}$ and $k_{+1,t}$ independently from the same distribution $p_t = \nabla (\psi_t + \cI_\Delta)^*(-\hat L_{t-1})$. Here, $\hat L_t = \sum_{s=1}^t \hat \ell_s \in \R_+^K$ and the importance weight estimator are defined for all $k \in [K]$ by
\[
  \hat \ell_t(k) = \frac{1}{2}\big( \hat \ell_{-1,t}(k) + \hat \ell_{+1,t}(k)\big) \,.
\]
Noting that $\E[\hat \ell_t(k)] = \E\big[P_t(k_{-1,t},k) + P_t(k_{+1,t},k)\big]/2$ and $\E\big[\sum_{k=1}^Kp_t(k)  \hat \ell_t(k)\big] = \E\big[P_t(k_{-1,t},k_{+1,t}) + P_t(k_{+1,t},k_{-1,t})\big]/2 = 1/2$, the proof follows similarly to the other one. Though the regret upper-bound is exactly the same, we believe that this version might lead to better performance because the two players share information.
\end{rem}



\section{Improvements Over Existing Dueling Bandit Results}

Also our approach and analysis is rather simple, it allows to outperform the best existing regret-upper bounds for stochastic dueling bandits with or without corruption. We believe that the dueling bandit community will benefit from this reduction and that it may be applied to a wider scope such as to deal with non-stationarity, delays, or non-standard feedbacks (graphs) for which many results already exist in standard multi-armed bandits.

\subsection{Stochastic Dueling Bandits with Condorcet Winner} 
\label{sec:stoch}
In stochastic dueling bandits, the preference matrices $P_t$ are fixed over time $P_t = P$ for all $t \geq 1$. Under the Condorcet winner assumption there exists $k^{\texttt{(cw)}} \in [K]$ such that $P(k^{\texttt{(cw)}},k) > \frac{1}{2}$ for all $k \neq k^{\texttt{(cw)}}$. Then, the suboptimality gaps of all actions $k \in [K]$ are defined as
$
    \Delta_k  := P(k^{\texttt{(cw)}}, k) - \frac{1}{2}.
$
Remarking that in this case the self-bounding assumption~\eqref{eq:self_bounded} is satisfied with $C = 0$, since
\begin{align*}
  \overline{R}_T 
    & = \frac{1}{2} \sum_{t=1}^T \E\Big[ P_t(k^{\texttt{(cw)}},k_{+1,t}) + P_t(k^{\texttt{(cw)}},k_{-1,t}) - 1\Big] \\
    & = \frac{1}{2} \sum_{t=1}^T \E\Big[\Delta_{k_{+1,t}} + \Delta_{k_{-1,t}}\Big] \\
    & = \frac{1}{2} \sum_{t=1}^T \E\Big[ \sum_{k \neq k^{\texttt{(cw)}}} (p_{-1,t} + p_{+1,t}) \Delta_k \Big]\,,
\end{align*}
we get the following corollary from Theorem~\ref{thm:mainthm}. 

\begin{corollary} For stochastic dueling bandits with Condorcet winner, the pseudo-regret of Algorithm~\ref{alg:bdb} with well-chosen parameters satisfies
\[
  \overline{R}_T \leq \sum_{k\neq k^{\texttt{(cw)}}} \frac{4 \log T + 12}{\Delta_k} + 4 \log T \\
    + \frac{1}{\Delta_{\min}} + \frac{3}{2} \sqrt{K} + 8\,.
\]
\end{corollary}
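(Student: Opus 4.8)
The plan is to obtain the corollary as a direct instantiation of the second (gap-dependent) bound in Theorem~\ref{thm:mainthm}, so that the only real work is to exhibit a gap vector for which the self-bounding inequality~\eqref{eq:self_bounded} holds with corruption level $C = 0$. I would take $k^* = k^{\texttt{(cw)}}$ and define $\Delta_k := P(k^{\texttt{(cw)}},k) - \tfrac12$ for every $k \in [K]$, exactly as in the statement. The Condorcet assumption $P(k^{\texttt{(cw)}},k) > \tfrac12$ for all $k \neq k^{\texttt{(cw)}}$ guarantees $\Delta_k > 0$ for each such $k$, while $\Delta_{k^{\texttt{(cw)}}} = 0$; hence the vanishing coordinate of $\Delta$ is unique, which is precisely the hypothesis required by Theorem~\ref{thm:mainthm}. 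The parameters are those fixed in the theorem, i.e.\ the Tsallis regularizer $\Psi_t(w) = \sqrt{t}\sum_{k=1}^K(\sqrt{w_k} - w_k/2)/8$.

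Next I would verify~\eqref{eq:self_bounded}. Since the preference matrix is stationary, $P_t = P$, the per-round regret~\eqref{eq:sreg} against $k^{\texttt{(cw)}}$ reads $\tfrac12\big(P(k^{\texttt{(cw)}},k_{+1,t}) + P(k^{\texttt{(cw)}},k_{-1,t}) - 1\big) = \tfrac12(\Delta_{k_{+1,t}} + \Delta_{k_{-1,t}})$ by definition of the gaps. Taking expectations and using the tower property over the sampling distributions $p_{\pm1,t}$ produced by the two OMD instances turns this into $\tfrac12 \sum_{k \neq k^{\texttt{(cw)}}} (p_{+1,t}(k) + p_{-1,t}(k))\Delta_k$, where the term $k = k^{\texttt{(cw)}}$ drops out because $\Delta_{k^{\texttt{(cw)}}} = 0$. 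Summing over $t$ gives $\E[R_T(k^{\texttt{(cw)}})] = \tfrac12 \E\big[\sum_{t=1}^T \sum_{k \neq k^{\texttt{(cw)}}} (p_{+1,t}(k)+p_{-1,t}(k))\Delta_k\big]$, and since $\overline{R}_T = \max_{k} \E[R_T(k)] \geq \E[R_T(k^{\texttt{(cw)}})]$, the inequality~\eqref{eq:self_bounded} holds with $C = 0$. This is exactly the chain of (in)equalities displayed just before the corollary.

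With the self-bounding property in hand, I would simply invoke the second bound of Theorem~\ref{thm:mainthm} verbatim, substituting $C = 0$ and $\Delta_{\min} = \min_{k \neq k^{\texttt{(cw)}}} \Delta_k$, which reproduces the claimed estimate. There is essentially no technical obstacle: the entire content of the corollary is the repackaging of the general gap-dependent guarantee for the particular gap vector induced by a Condorcet winner. The only two points deserving a line of justification are (i) that the zero coordinate of $\Delta$ is unique, handled by strictness of the Condorcet condition, and (ii) that it suffices to lower-bound $\overline{R}_T$ by the regret against $k^{\texttt{(cw)}}$ rather than to identify the maximizing $k$, since~\eqref{eq:self_bounded} only requires a lower bound on $\overline{R}_T$.
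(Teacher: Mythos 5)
Your proposal is correct and follows essentially the same route as the paper: identify $k^* = k^{\texttt{(cw)}}$, check that the Condorcet gaps give the self-bounding condition~\eqref{eq:self_bounded} with $C=0$, and invoke the gap-dependent bound of Theorem~\ref{thm:mainthm}. If anything, your use of the inequality $\overline{R}_T \geq \E[R_T(k^{\texttt{(cw)}})]$ is slightly more careful than the paper's display, which asserts equality between $\overline{R}_T$ and the Condorcet regret even though only the lower bound is needed (and only the lower bound is immediate, since the maximizing comparator need not be $k^{\texttt{(cw)}}$ under non-transitive preferences).
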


Note that the above bound is the first pseudo-regret upper-bound for stochastic dueling bandit that does not suffer from a $\Delta_{\min}^{-2}$ dependence under the Condorcet winner assumption, without a quadratic dependence on the number of arms, $K$, which is a concern when it comes to dealing with
large-scale problems. For instance, RUCB \cite{Zoghi+14RUCB} satisfies a regret bound of order $O(K \log(T)  \Delta_{\min}^{-2} + K^2)$, MergeRUCB \cite{Zoghi+15MRUCB} has linear dependence on $K$ but suffers $O(K \log(T)\Delta_{\min}^{-2})$. Finally, RMED from \cite{Komiyama+15} is asymptotically optimal when $T\to \infty$ but also suffers from large constant terms ($K^2$ and $\Delta_{\min}^{-2}$) and is only valid for $K\to \infty$. We refer the reader to \cite{bengs2021preference} for existing results on stochastic dueling bandits.

\subsection{Corrupted Regime}
\label{sec:corruption}

Here, we consider the stochastic dueling bandit problem in the presence of adversarial corruptions. The robustness to adversarial corruption has known recent progress in the MAB setting \citep{gupta2019better,lykouris2018stochastic,zimmert2021tsallis}  and has recently been extended to the DB framework \citep{agarwal2021stochastic}. 
The preference matrices are fixed $P_t = P$ for all $t\geq 1$ and we assume the existence of a Condorcet winner $k^{\texttt{(cw)}}$. Furthermore, an adversary may corrupt the outcomes of some duels by replacing the results of the duels $o_t(k,k')$ with corrupted ones $\tilde o_t(k,k')$. At the end of each round, the player only observes $\tilde o_t(k_{+1,t},k_{-1,t})$. The objective of the player is to minimize the pseudo-regret $\bar R_T$ under a bounded total amount of corruption 
\[
     C := \sum_{t=1}^T \sum_{k \neq k^{\texttt{(cw)}}} \big|o_t(k^{\texttt{(cw)}},k) - \tilde o_t(k^{\texttt{(cw)}},k)\big|.
\]
We show here that similarly to what happens for standard Multi-armed bandits in \cite{zimmert2021tsallis}, this corrupted setting is a special case of the self-bounding assumption~\eqref{eq:self_bounded}. Indeed, defining $\tilde P_t$ the corrupted preference matrices by $\tilde P_t(k,k') = \E\big[\tilde o_t(k,k')\big]$ and the corrupted pseudo-regret 
\[
    \widetilde{R}_T(k) = \frac{1}{2} \E\bigg[\sum_{t=1}^T \tilde P_t(k,k_{+1,t}) + P_t(k,k_{-1,t}) -1\bigg] \,,
\] 
we have
\begin{align*}
     & \widetilde{R}_T(k^{\texttt{(cw)}}) \\
        &= \frac{1}{2} \E\bigg[ \sum_{t=1}^T \tilde o_t(k^{\texttt{(cw)}},k_{+1,t}) + \tilde o_t(k^{\texttt{(cw)}},k_{-1,t}) - 1\bigg] \\
        &\geq  \frac{1}{2} \E\bigg[ \sum_{t=1}^T o_t(k^{\texttt{(cw)}},k_{+1,t}) + o_t(k^{\texttt{(cw)}},k_{-1,t}) - 1\bigg] -  C \\
        & = \overline{R}_T -  C \\
        & = \frac{1}{2} \sum_{t=1}^T \E\Big[ \sum_{k \neq k^{\texttt{(cw)}}} (p_{-1,t} + p_{+1,t}) \Delta_k \Big] -  C \,.
\end{align*}
Therefore, the corrupted regime satisfies the self-bounding assumption~\eqref{eq:self_bounded}. Applying Theorem~\ref{thm:mainthm} on the corrupted regime and using that we also have $\overline{R}_T(k^{\texttt{(cw)}}) \leq \widetilde{R}_T(k^{\texttt{(cw)}}) + C$, we get the following corollary.

\begin{corollary}\label{cor:corruption}
For stochastic dueling bandits with Condorcet winner and corruptions, whose total amount is bounded by $C$, the pseudo-regret $\overline{R}_T$ Algorithm~\ref{alg:bdb} is upper-bounded as
\begin{align*}
    \overline{R}_T \leq \sum_{k\neq k^{\texttt{(cw)}}} \frac{4 \log T + 12}{\Delta_k} + 4 \log T 
    + \frac{1}{\Delta_{\min}} + \frac{3}{2} \sqrt{K} + 8 + 2C.
\end{align*}
\end{corollary}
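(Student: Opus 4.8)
The plan is to obtain Corollary~\ref{cor:corruption} as a direct specialization of Theorem~\ref{thm:mainthm}, viewing the corrupted problem as one more instance of the self-bounding regime~\eqref{eq:self_bounded}. The key conceptual observation is that Algorithm~\ref{alg:bdb} never accesses the true outcomes $o_t$: it updates its iterates $p_{+1,t},p_{-1,t}$ solely from the corrupted feedback $\tilde o_t(k_{+1,t},k_{-1,t})$. Hence the algorithm is \emph{exactly} an instance of the reduction run against the (possibly adversarial) sequence of corrupted preference matrices $\tilde P_t$, and the second part of Theorem~\ref{thm:mainthm} applies verbatim to the corrupted pseudo-regret $\widetilde R_T := \max_k \E[\widetilde R_T(k)]$, with the $p_{\pm1,t}$ appearing in~\eqref{eq:self_bounded} being the genuine algorithmic iterates.

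First I would record the self-bounding property, which is precisely the chain of inequalities established just above the corollary statement: since $\sum_t\sum_{k\neq k^{\texttt{(cw)}}}|o_t(k^{\texttt{(cw)}},k)-\tilde o_t(k^{\texttt{(cw)}},k)|\leq C$, one gets
\[
  \widetilde R_T(k^{\texttt{(cw)}}) \geq \overline R_T - C = \frac{1}{2}\sum_{t=1}^T \E\Big[\sum_{k\neq k^{\texttt{(cw)}}}(p_{-1,t}+p_{+1,t})\Delta_k\Big] - C,
\]
where the gaps $\Delta_k = P(k^{\texttt{(cw)}},k)-\tfrac12$ are those of the \emph{true} matrix $P$. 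Because the Condorcet winner provides the unique zero coordinate of $\Delta$ and $\widetilde R_T \geq \widetilde R_T(k^{\texttt{(cw)}})$, this is exactly assumption~\eqref{eq:self_bounded} for $\widetilde R_T$ with constant $C$ and $k^* = k^{\texttt{(cw)}}$.

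Next I would invoke Theorem~\ref{thm:mainthm} on the corrupted regime. As the self-bounding constant is now $C$, its conclusion reads
\[
  \widetilde R_T \leq \sum_{k\neq k^{\texttt{(cw)}}} \frac{4\log T + 12}{\Delta_k} + 4\log T + \frac{1}{\Delta_{\min}} + \frac{3}{2}\sqrt{K} + 8 + C.
\]
To pass from the corrupted pseudo-regret back to the true one, I would then use the other half of the corruption sandwich, $\overline R_T \leq \widetilde R_T(k^{\texttt{(cw)}}) + C \leq \widetilde R_T + C$, which injects a second copy of $C$ and yields the claimed $+2C$.

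The argument is thus essentially an assembly of already-established pieces, and I do not expect a genuinely new difficulty. The one place deserving care is the bookkeeping of the two occurrences of $C$ and, more importantly, the insistence that the gaps $\Delta_k$ entering the final bound are those of the true preference matrix $P$ even though the learner only ever observes $\tilde P_t$. The substantive step — already carried out in the excerpt — is the corruption-translation inequality $|\widetilde R_T(k^{\texttt{(cw)}}) - \overline R_T| \leq C$, which rests on $\tilde P_t(k,k') = \E[\tilde o_t(k,k')]$ together with the definition of the total corruption budget; everything else is a mechanical specialization of the main theorem.
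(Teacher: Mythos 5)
Your proposal is correct and follows essentially the same route as the paper: you verify the self-bounding property~\eqref{eq:self_bounded} for the corrupted regime via the corruption budget (one factor of $C$), apply Theorem~\ref{thm:mainthm} to the algorithm run on the corrupted feedback, and then translate the corrupted pseudo-regret back to $\overline{R}_T$ at the cost of a second factor of $C$, yielding the $+2C$. Your write-up is in fact slightly more careful than the paper's in distinguishing $\widetilde{R}_T$ from $\widetilde{R}_T(k^{\texttt{(cw)}})$ and in emphasizing that the algorithm's iterates are driven solely by $\tilde o_t$, but the decomposition and all key inequalities coincide.
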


Although  Corollary~\ref{cor:corruption} easily follows from Theorem~\ref{thm:mainthm} which itself easily follows from Theorem~1 of  \cite{zimmert2021tsallis}, the latter result significantly improves upon the recent results on dueling bandit with corruptions obtained by \cite{agarwal2021stochastic}. Indeed, the latter provide for the same setting and a significantly more sophisticated procedure a high-probability regret bound of order
\[
    O\left( \frac{K^2 C}{\Delta_{\min}} + \sum_{k \neq k^{\texttt{(cw)}}} \frac{K^2}{\Delta_k^2} \log \Big(\frac{K}{\Delta_k}\Big) + \sum_{k \neq k^{\texttt{(cw)}}} \frac{\log T}{\Delta_k} \right) \,.
\]
As often in the dueling bandit literature, it suffers from both a quadratic dependence on the number of actions and $\Delta_{\min}^{-1}$. Furthermore, our regret bound is sublinear in $T$ as soon as the corruption level is $o(T)$ while \citet{agarwal2021stochastic} can only afford $o(\Delta_{\min} T/K^2)$. 

Moreover, \citet{zimmert2021tsallis} also provide an upper-bound for stochastic bandits with adversarial corruption. The latter is of order 
$
    O\bigg(\sum_{k\neq k^{\texttt{(cw)}}} \frac{\log T}{\Delta_k}  + \sqrt{C \sum_{k\neq k^{\texttt{(cw)}}} \frac{\log T}{\Delta_k} }\bigg)\,,
$
which seems to outperform our bound when $C$ is large. The difference is since they upper-bound the corrupted regret $\widetilde R_T(k^{\texttt{(cw)}})$ and not $\bar R_T$.

\section{Experiments}
\label{sec:expts}


\textbf{Algorithms. } 
We compared the performances of the following algorithms: 
$1.$ VDB: Our proposed \algbdb\, (Alg. \ref{alg:bdb}) algorithm. 
$2.$ RUCB: The algorithm proposed in \cite{Zoghi+14RUCB} for K-armed stochastic dueling bandits for CW regret. (We set the algorithm parameter $\alpha = 0.6$).
$3.$ RMED: Another algorithm for CW regret as proposed in \cite{Komiyama+15}. (We set the algorithm parameter $f(K) = 0.3K^{1.01}$ as suggested in their experimental evaluation).
$4.$ DTS: The double thompson sampling algorithm of \cite{DTS}. (Here again we set the similar algorithm parameter $\alpha = 0.6$).
$5.$ REX3: As introduced in \cite{Adv_DB}. Note that their suggested optimal tuning parameters, i.e. the uniform exploration rate $\gamma$ as well as the learning rate $\eta$ requires the knowledge of problem dependent parameters $\tau$ (see Thm. 1 of \cite{Adv_DB}) which are unknown to the learner. We used $T$ in place of $\tau$ henceforth. 

\textbf{Performance Measures.} We report the average cumulative regret (Eqn. \eqref{eq:sreg}) of the algorithms averaged over $20$ runs.

\subsection{Stochastic Preferences}
We compared their regret performance across the following stochastic environments:
 
\textbf{Constructing Preference Matrices ($\P$).} 
We use four different utility parameter $\btheta = (\theta_1,\ldots,\theta_K)$ based preference models where the underlying preference model is defined as $P(i,j):= \frac{\theta_i}{\theta_i + \theta_j} ~\forall i,j \in [K]$. 
The model is famously studied as BTL model or model generally Plackett-Luce model \cite{SGwin18,ChenSoda+18,shah17}. Note this ensures $P$ to have \emph{total-ordering} \cite{BTM,falahatgar_nips}.

In particular we consider the following choices of $\btheta$:
$1.$ {\it Trivial} $2.$ {\it Easy} $3.$ {\it Medium}, and $4.$ {\it Hard} with their respective $\btheta$ parameters are given by:
$1.$ {\it Trivial:} $\btheta(1) = 1$, $\btheta(2:K) = 0.5$.
$2.$ {\it Easy:} $\btheta(1:\lfloor K /2 \rfloor) = 1$, $\btheta(\lfloor K/2 \rfloor + 1:K) = 0.5$.
$3.$ {\it Medium:} $\btheta(1:\lfloor K/3 \rfloor) = 1$, $\btheta(\lfloor K/3 \rfloor+1:\lfloor 2K/3 \rfloor) = 0.7$, $\btheta(\lfloor 2K/3 \rfloor + 1:K) = 0.4$.
$4.$ {\it Hard:} $\btheta(i) = 1 - (i-1)/K,\, \forall i \in [K]$. Note for each $\bsigma^* = (1 > 2> \ldots K)$. 
For the purpose of our experiments we set $K=10$. 
We also evaluated the algorithms on two general $10\times 10$ preference matrices {\it Car} and {\it Hurdy} as also used in \cite{niranjan2017,SG18}.

\textbf{Regret vs Time.}
Fig. \ref{fig:vs_t} shows the relative performances of different algorithms with time. As follows from the plots, in general \textit{VDB} (\algbdb) outperforms the rest in all instances, with \textit{DTS} being closely competitive in some cases. In terms of the problem hardness, as their names suggest too, the
\textit{Trivial} and \textit{Easy} instances are easiest to learn as the best-vs-worst item preferences are well separated in these cases and the diversity of the item preferences across different groups are least. Consequently the algorithms yield slightly more regret on \emph{instance-Medium} due to higher preference diversity, and the hardest instance being \emph{Hard} where the algorithms require maximum time to converge, though \textit{VDB} reaching the convergence fastest still. 

\begin{figure}[h]
	\begin{center}
		\includegraphics[width=0.33\textwidth]{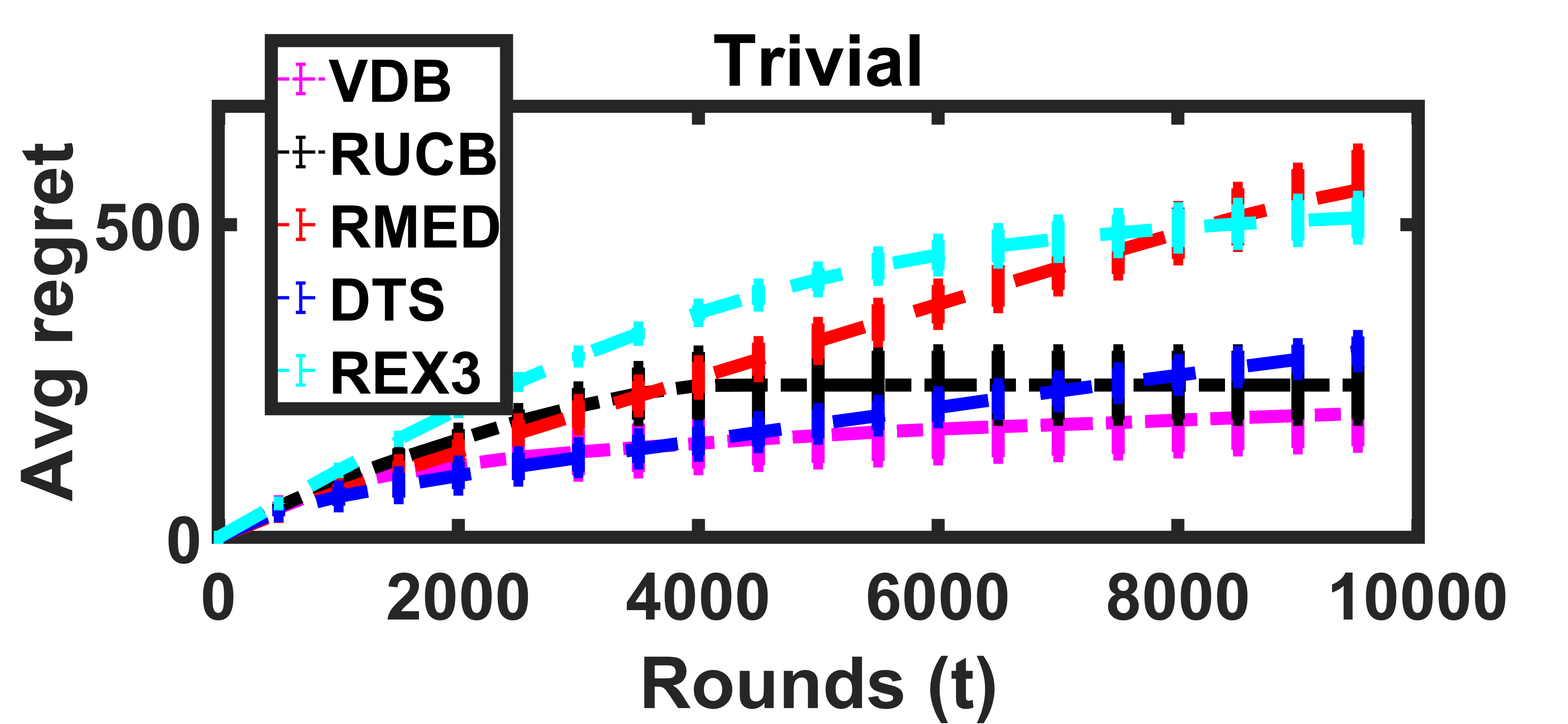}\hspace{-5pt}
		\includegraphics[width=0.33\textwidth]{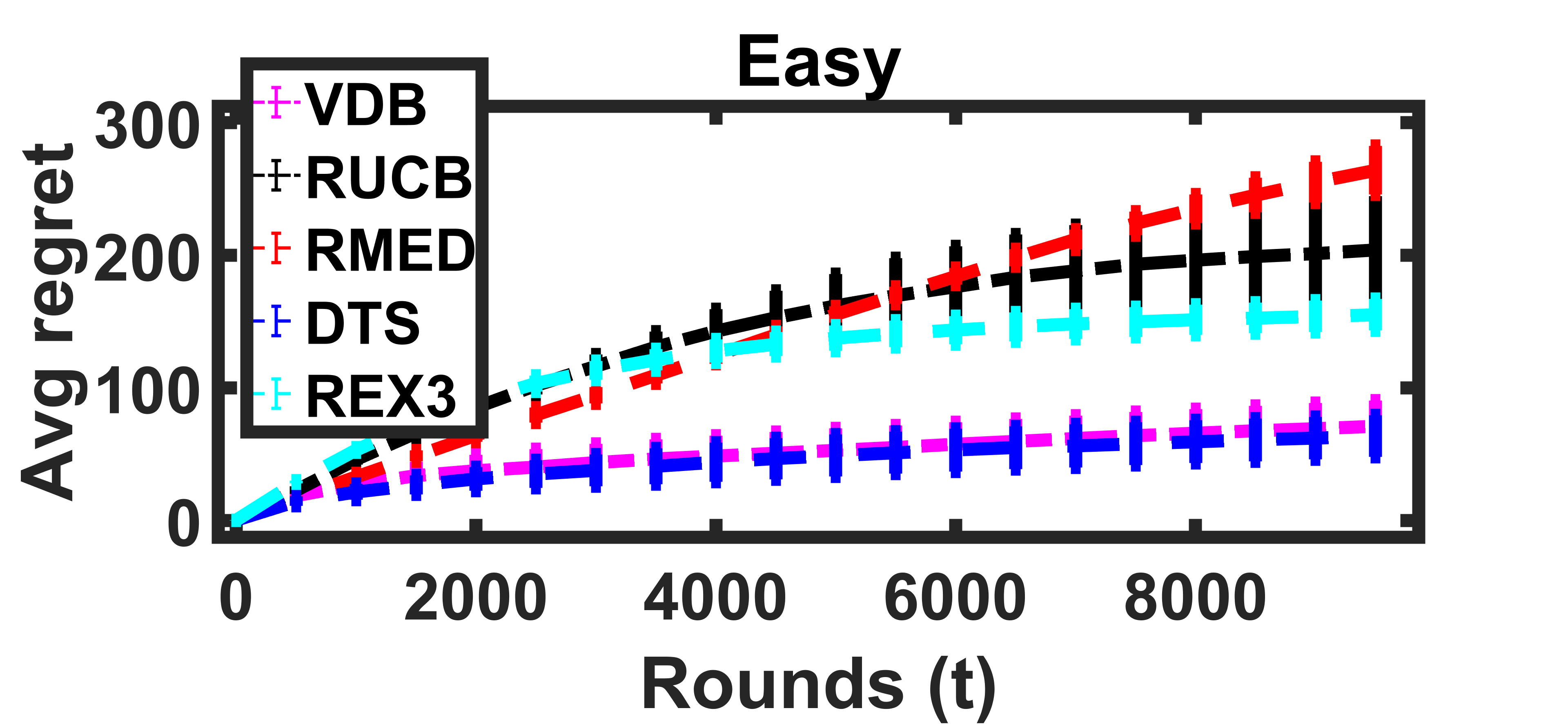}
		\includegraphics[width=0.33\textwidth]{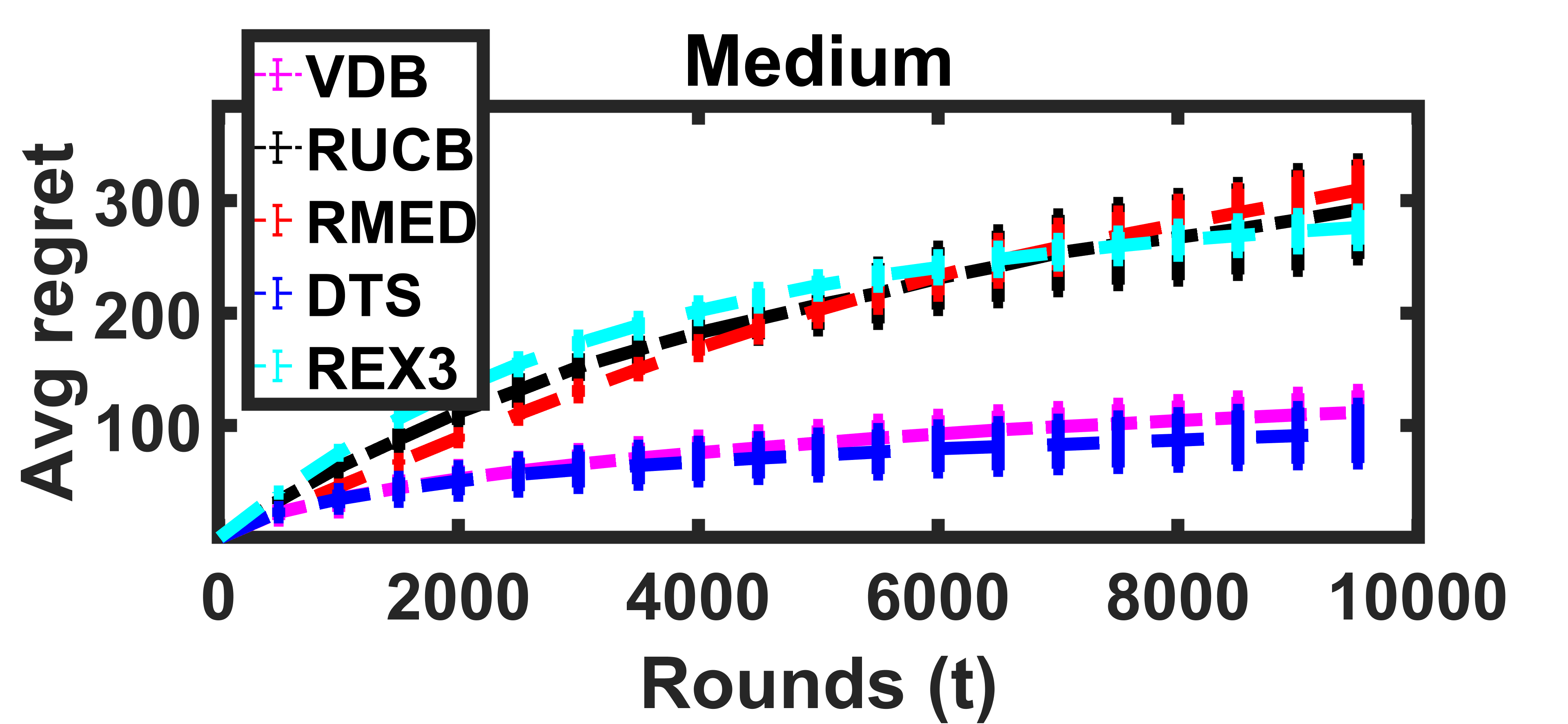}\hspace{-5pt}
		\includegraphics[width=0.33\textwidth]{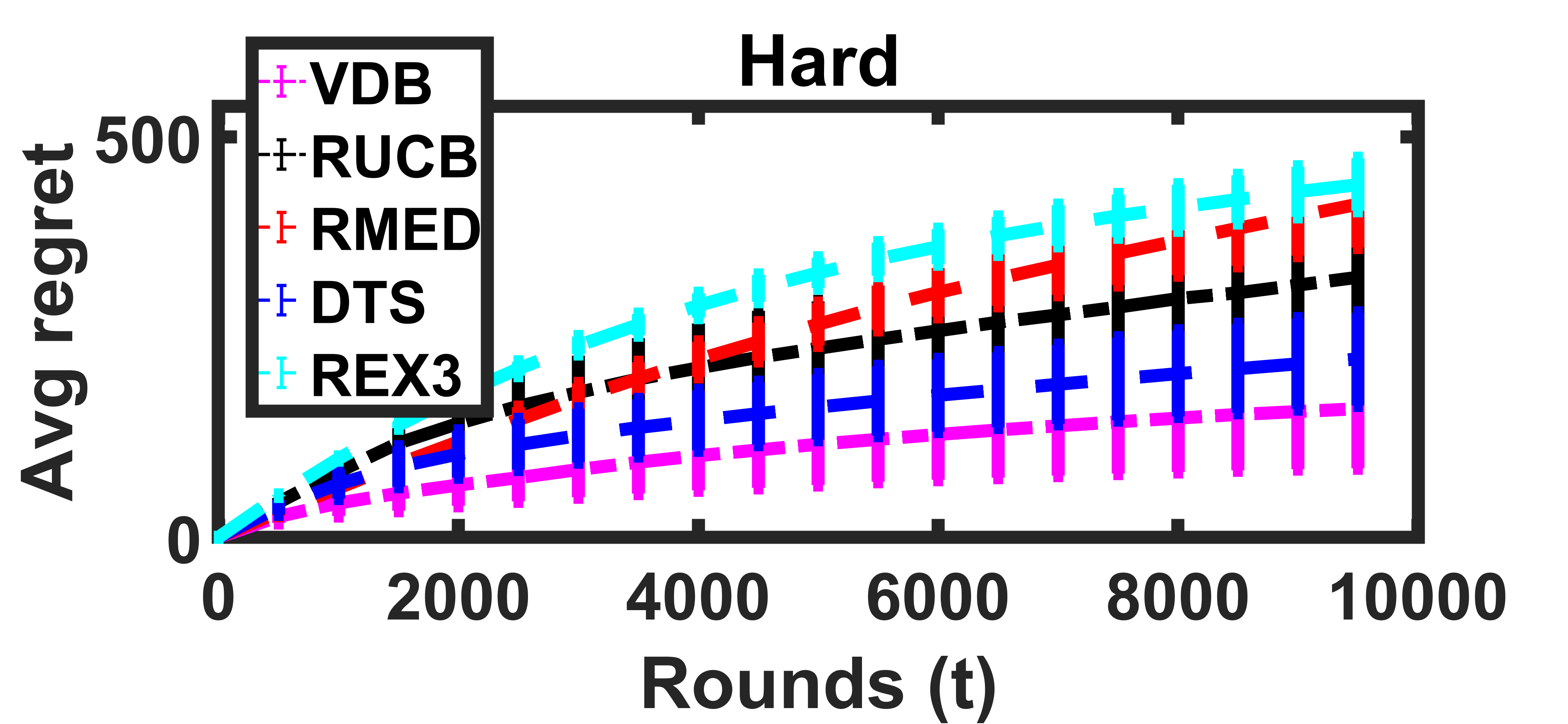}
		\includegraphics[width=0.33\textwidth]{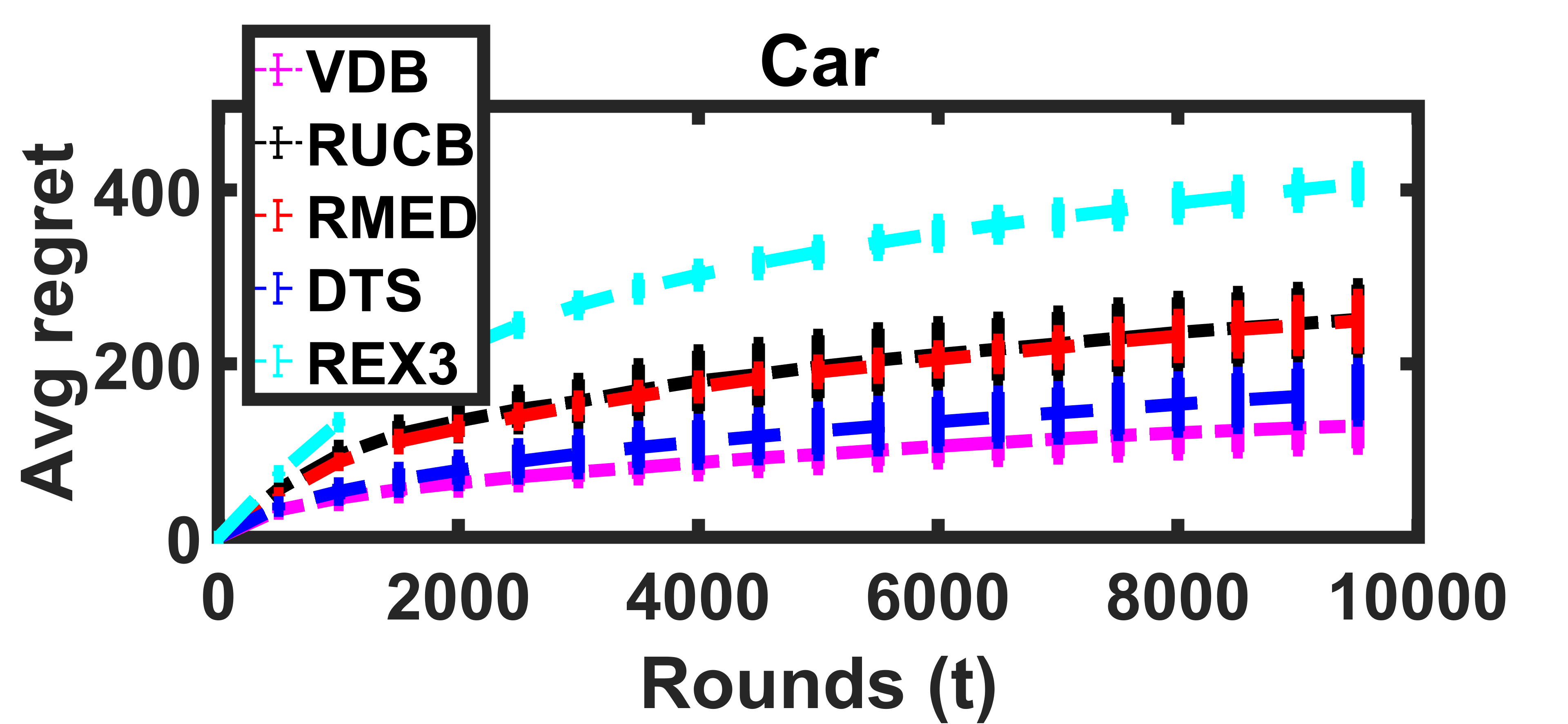}\hspace{-5pt}
		\includegraphics[width=0.33\textwidth]{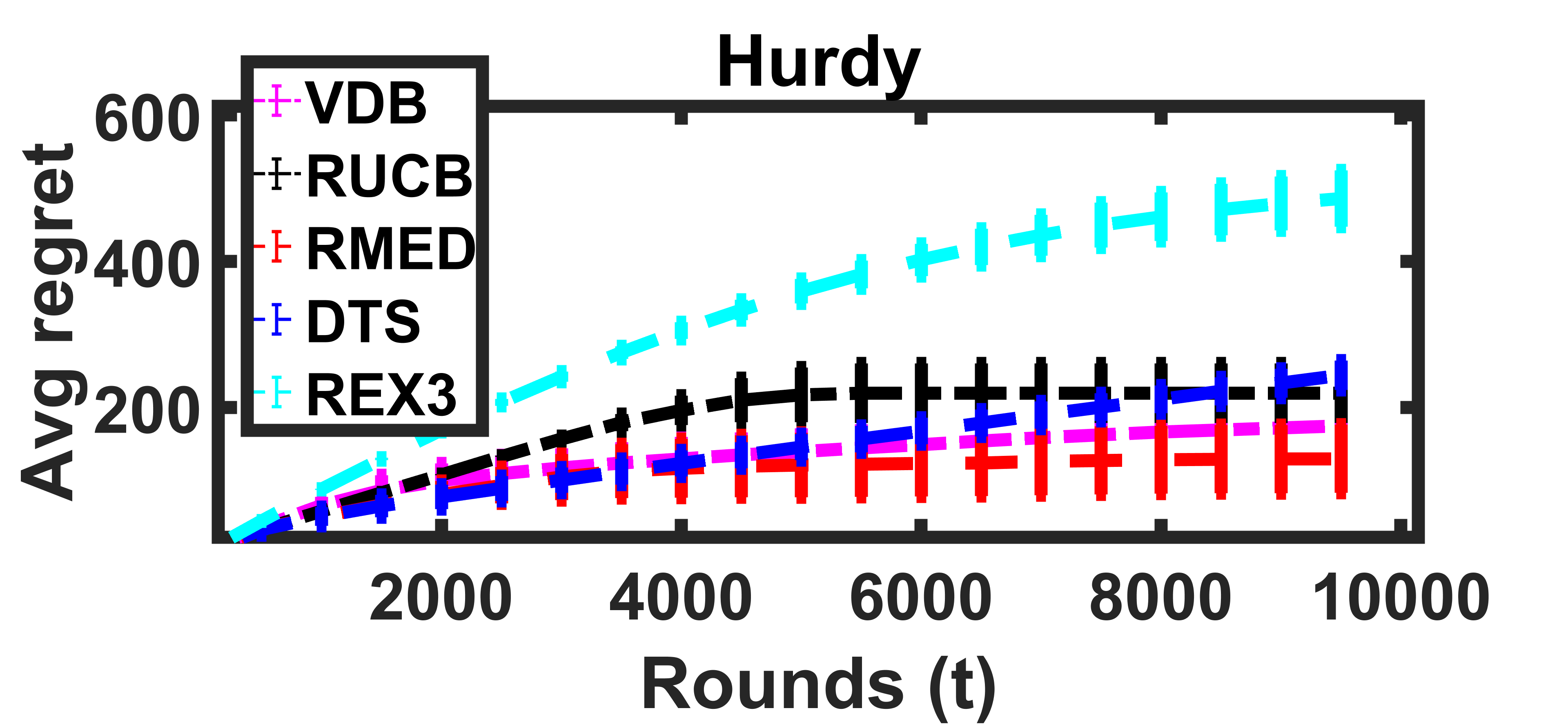}
		\vspace{-10pt}
		\caption{Averaged cumulative regret over time}
		\label{fig:vs_t}
	\end{center}
\end{figure}
\smallskip 

\subsection{Corrupted Preferences}
We also evaluated the performances of algorithms in presence of corruption (Sec. \ref{sec:corruption}). In particular, Fig. \ref{fig:vs_corr2} and \ref{fig:vs_corr4} respectively shows the relative performances of the algorithms with $20\%$ and $40\%$ corrupted feedback (at each round, we flip the winner feedback with that certain probability) respectively on \textit{Medium} and \textit{Hard} Plackett-Luce instances. As expected, the performances of all the algorithms decay significantly with increasing degree of feedback-corruption, however as before, \textit{VDB} consistently performed best over all the baselines and tend to converge the fastest among all.

\begin{figure}[h]
	\begin{center}
		\includegraphics[width=0.34\textwidth]{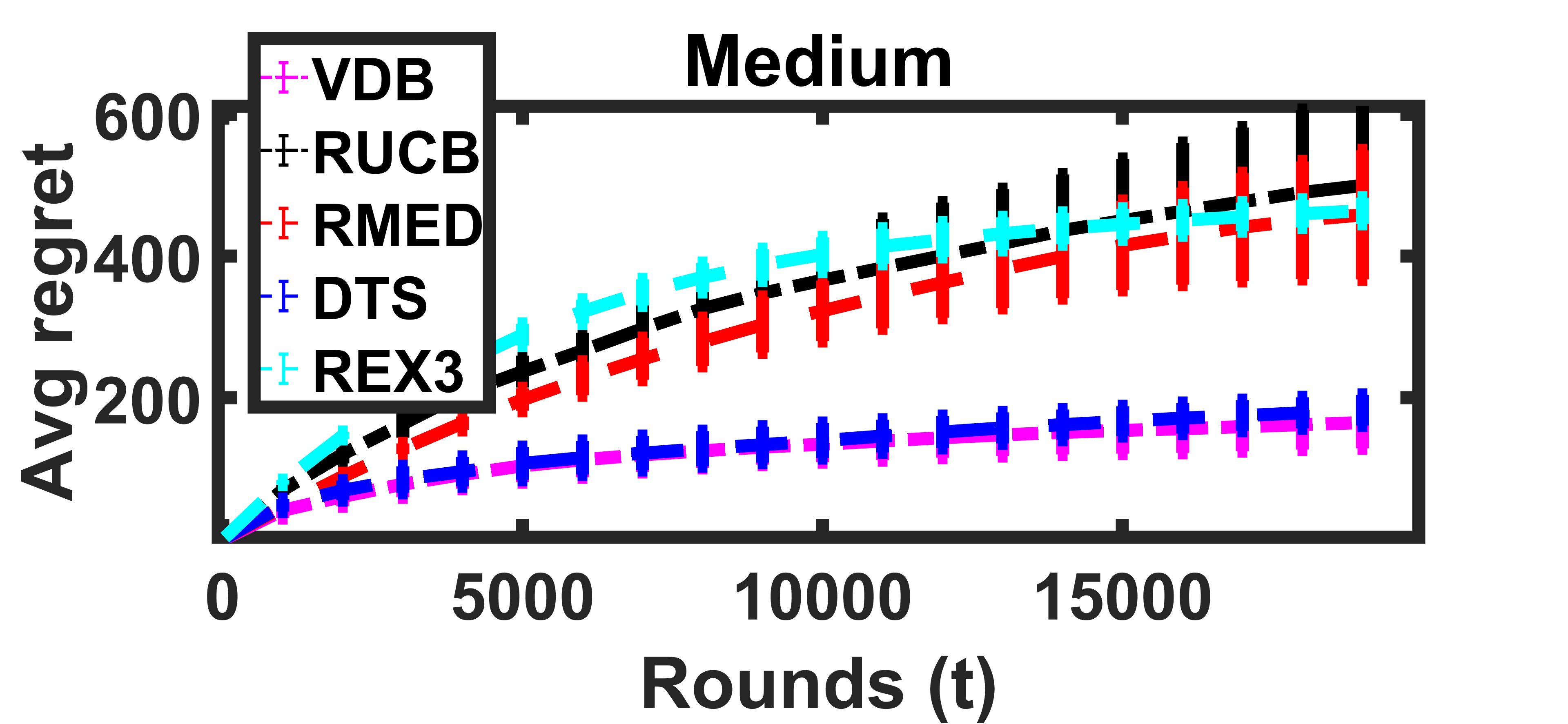}\hspace{-5pt}
		\includegraphics[width=0.34\textwidth]{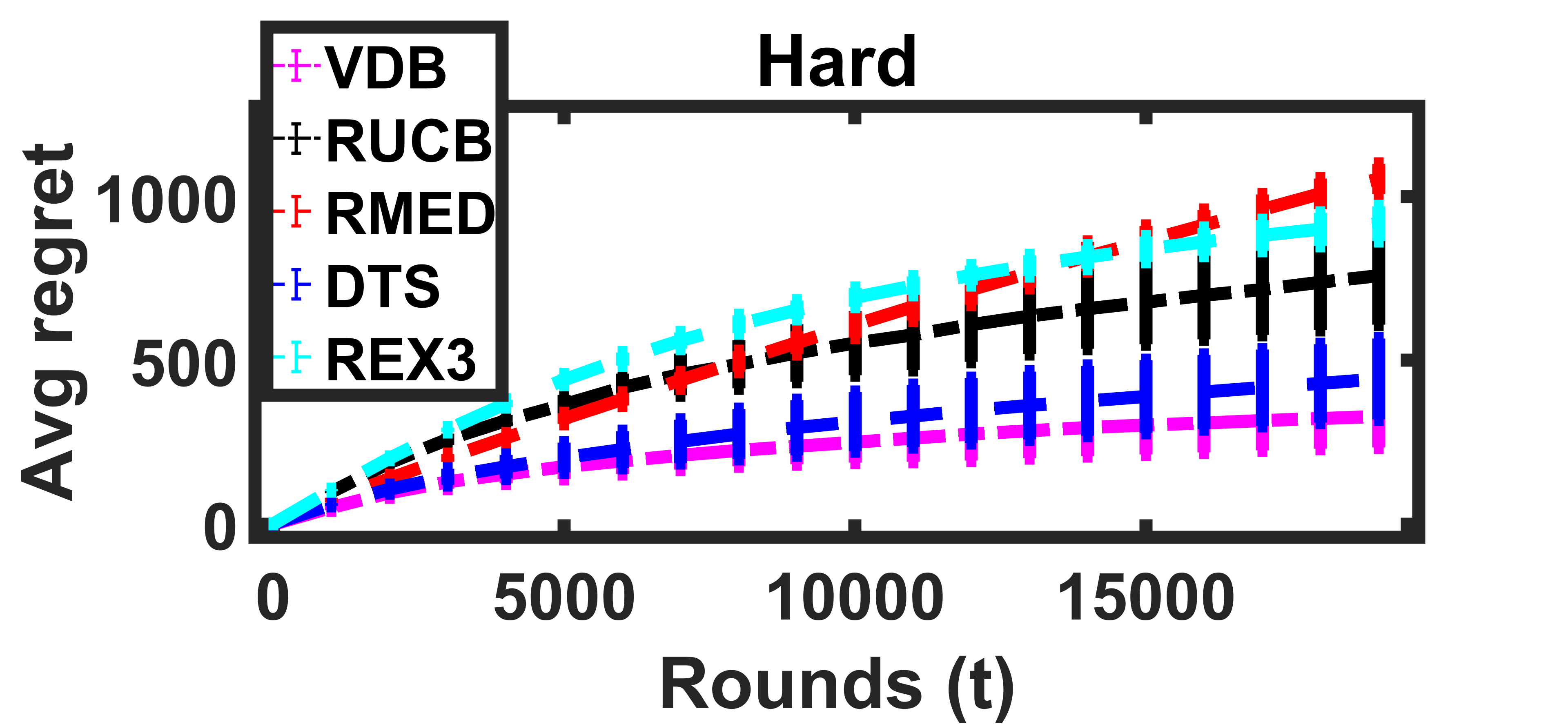}
		\vspace{-10pt}
		\caption{Averaged cumulative regret ($20\%$ corrupted feedback)}
		\label{fig:vs_corr2}
	\end{center}
\end{figure}
\vspace{-20pt}
\begin{figure}[h]
	\begin{center}
		\includegraphics[width=0.34\textwidth]{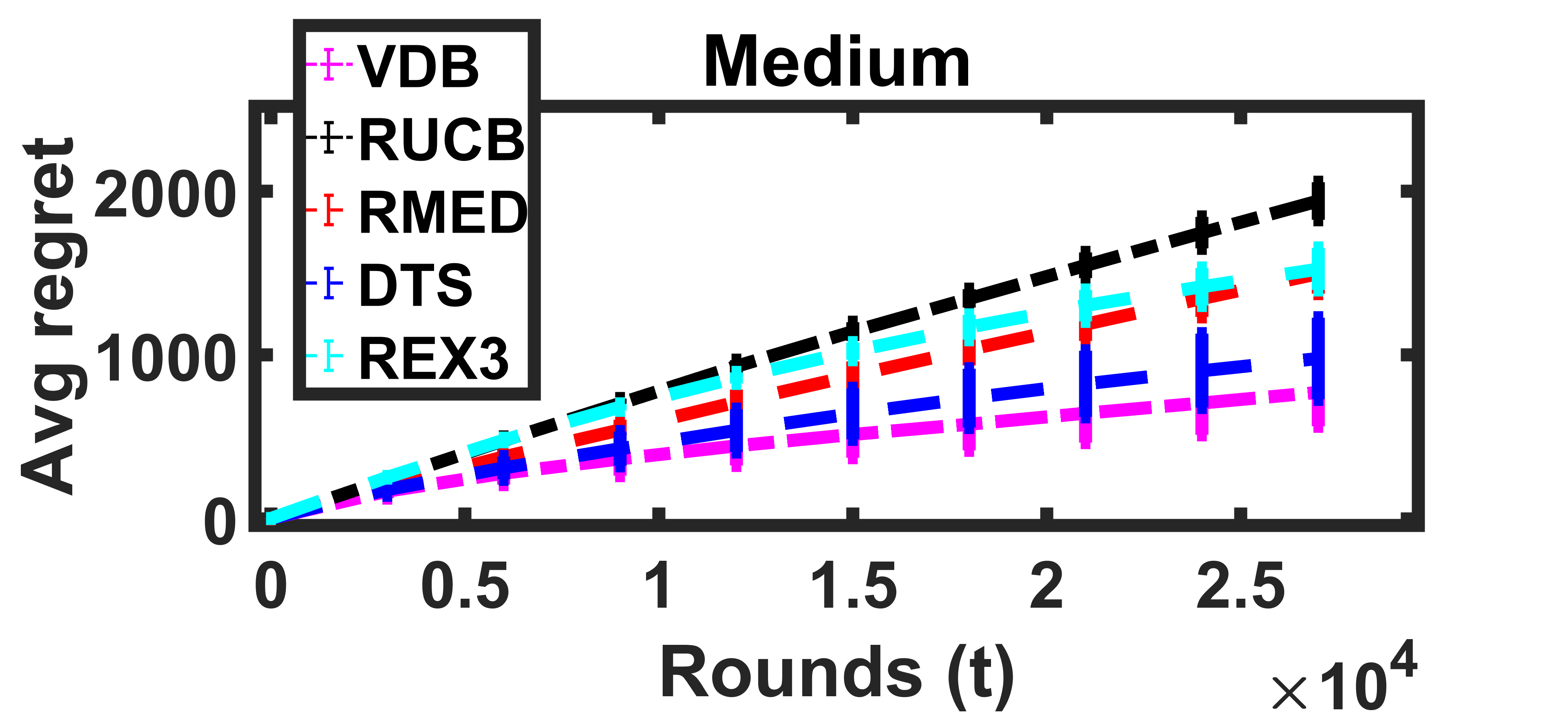}\hspace{-5pt}
		\includegraphics[width=0.34\textwidth]{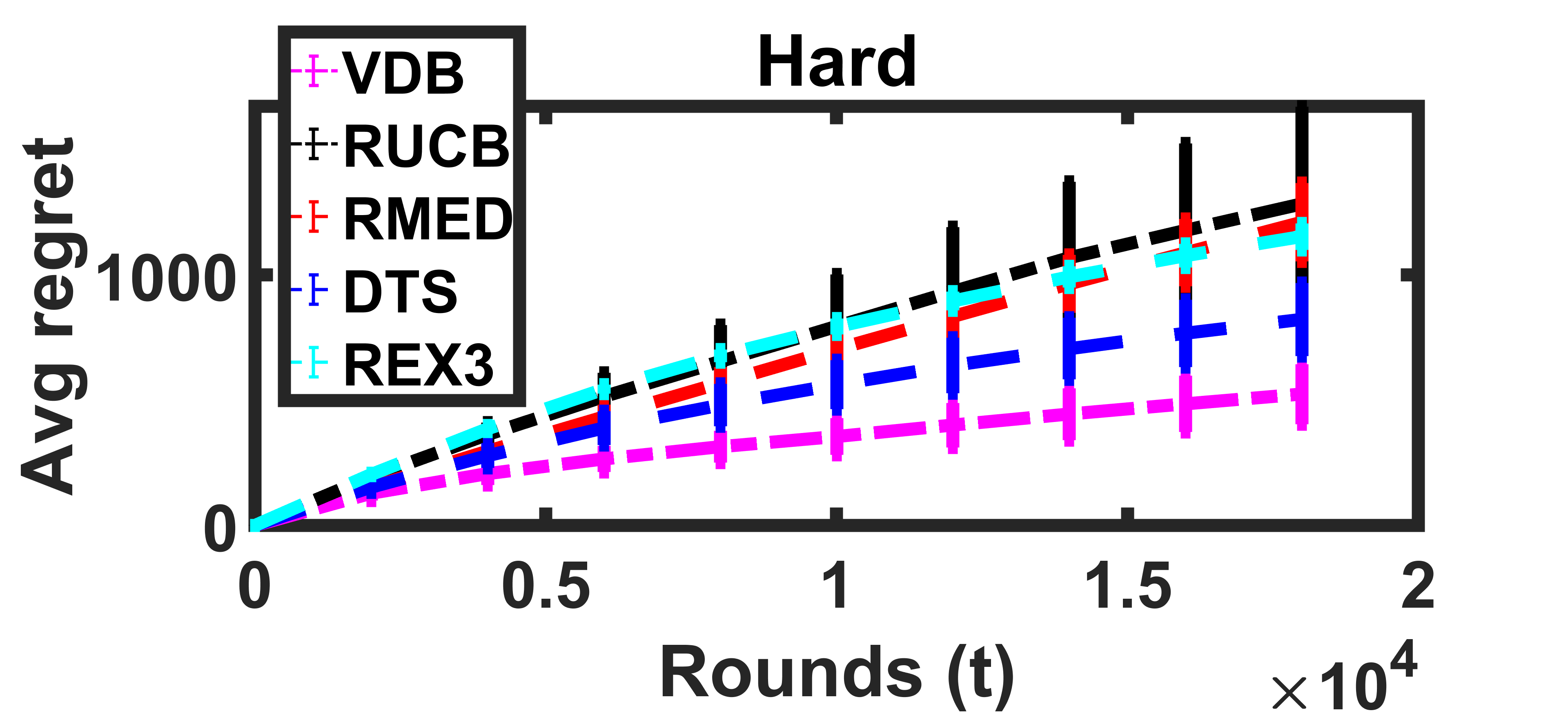}
		\vspace{-10pt}
		\caption{Averaged cumulative regret ($40\%$ corrupted feedback)}
		\label{fig:vs_corr4}
	\end{center}
\end{figure}

\vspace{-10pt}
\section{Discussions}
\label{sec:concl}

We studied the problem of \bdb, which gives the first \emph{`best-of-both' world} result for the problem of Dueling Bandits. The crux of our analyses relies on a \emph{novel idea of decomposing the dueling bandit regret into multiarmed bandit (MAB) regret} by interpreting the dueling preference feedback as a certain realization of adversarial reward sequence. 
An important byproduct of our best-of-both dueling analysis is, this gives the first order optimal gap-dependent regret bound for $K$-armed stochastic dueling bandits, closing the decade-long open problem of tightness of `Condorcet dueling bandit regret'.
Further we also analyze the robustness of our algorithm under corrupted preference feedback setting, which provably improves over the state-of-the art corrupted dueling bandits algorithms.

\textbf{Future Works.} 
Proving the first \emph{best-of-both world} result for dueling bandits using our novel reduction idea is just a first step towards exploring the possibility of understanding how far this idea can be extended to apply existing multiarmed bandits results to dueling bandits frameworks, 
instead of putting individual and isolated efforts in developing dueling bandit algorithms, taking inspirations from existing MAB generalizations. 
Some such extensions could be to analyze dynamic dueling bandit regret under non-stationary preferences \cite{wei21,luo+19,besbes+15}, item non-availability \cite{neu14,kanade09}, delayed feedback \cite{delay1,delay2,delay3}, budget constraints \cite{budget1,budget2,budget3}, or even the more general reinforcement learning (RL) scenarios \cite{ucrl,talebi18,ng06}, for which there are already well established theory of works with MAB framework. 
Also under what settings of Dueling Bandits, its corresponding MAB counterpart based reductions are bound to fail?
%
 
Finally it is worth mentioning that, an ambitious (and broad) objective along these line of thoughts is to understand the connection between different learning scenarios to dueling bandits, 
e.g. feedback graphs \cite{Alon+15,Alon+17},
partial monitoring problems \cite{pm1,pm2,pm3},
markov games \cite{xie+20,bai+20,bai20},
etc. The obvious motivation being to understand how far we can re-engineer the existing results from related learning literature to solve the online preference bandits problems.

\section*{Acknowledgment}
Thanks to Julian Zimmert and Karan Singh for the useful discussions on the existing best-of-both-world multiarmed bandits results.

\newpage
\bibliographystyle{plainnat}
\bibliography{bib_bdb}

\begin{thebibliography}{68}
\providecommand{\natexlab}[1]{#1}
\providecommand{\url}[1]{\texttt{#1}}
\expandafter\ifx\csname urlstyle\endcsname\relax
  \providecommand{\doi}[1]{doi: #1}\else
  \providecommand{\doi}{doi: \begingroup \urlstyle{rm}\Url}\fi

\bibitem[Agarwal et~al.(2021)Agarwal, Agarwal, and
  Patil]{agarwal2021stochastic}
Arpit Agarwal, Shivani Agarwal, and Prathamesh Patil.
\newblock Stochastic dueling bandits with adversarial corruption.
\newblock In \emph{Algorithmic Learning Theory}, pages 217--248. PMLR, 2021.

\bibitem[Ailon et~al.(2014)Ailon, Karnin, and Joachims]{ailon2014reducing}
Nir Ailon, Zohar Karnin, and Thorsten Joachims.
\newblock Reducing dueling bandits to cardinal bandits.
\newblock In \emph{International Conference on Machine Learning}, pages
  856--864. PMLR, 2014.

\bibitem[Alon et~al.(2015)Alon, Cesa-Bianchi, Dekel, and Koren]{Alon+15}
Noga Alon, Nicolo Cesa-Bianchi, Ofer Dekel, and Tomer Koren.
\newblock Online learning with feedback graphs: Beyond bandits.
\newblock In \emph{JMLR Workshop and Conference Proceedings}, volume~40.
  Microtome Publishing, 2015.

\bibitem[Alon et~al.(2017)Alon, Cesa-Bianchi, Gentile, Mannor, Mansour, and
  Shamir]{Alon+17}
Noga Alon, Nicolo Cesa-Bianchi, Claudio Gentile, Shie Mannor, Yishay Mansour,
  and Ohad Shamir.
\newblock Nonstochastic multi-armed bandits with graph-structured feedback.
\newblock \emph{SIAM Journal on Computing}, 46\penalty0 (6):\penalty0
  1785--1826, 2017.

\bibitem[Auer and Chiang(2016)]{auer2016algorithm}
Peter Auer and Chao-Kai Chiang.
\newblock An algorithm with nearly optimal pseudo-regret for both stochastic
  and adversarial bandits.
\newblock In \emph{Conference on Learning Theory}, pages 116--120. PMLR, 2016.

\bibitem[Auer et~al.(2002)Auer, Cesa-Bianchi, and Fischer]{Auer+02}
Peter Auer, Nicolo Cesa-Bianchi, and Paul Fischer.
\newblock Finite-time analysis of the multiarmed bandit problem.
\newblock \emph{Machine learning}, 47\penalty0 (2-3):\penalty0 235--256, 2002.

\bibitem[Auer et~al.(2009)Auer, Jaksch, and Ortner]{ucrl}
Peter Auer, Thomas Jaksch, and Ronald Ortner.
\newblock Near-optimal regret bounds for reinforcement learning.
\newblock In \emph{Advances in neural information processing systems}, pages
  89--96, 2009.

\bibitem[Bai and Jin(2020)]{bai20}
Yu~Bai and Chi Jin.
\newblock Provable self-play algorithms for competitive reinforcement learning.
\newblock In \emph{International Conference on Machine Learning}, pages
  551--560. PMLR, 2020.

\bibitem[Bai et~al.(2020)Bai, Jin, and Yu]{bai+20}
Yu~Bai, Chi Jin, and Tiancheng Yu.
\newblock Near-optimal reinforcement learning with self-play.
\newblock In \emph{Advances in Neural Information Processing Systems}, 2020.

\bibitem[Bengs et~al.(2021)Bengs, Busa-Fekete, El~Mesaoudi-Paul, and
  H{\"u}llermeier]{bengs2021preference}
Viktor Bengs, R{\'o}bert Busa-Fekete, Adil El~Mesaoudi-Paul, and Eyke
  H{\"u}llermeier.
\newblock Preference-based online learning with dueling bandits: A survey.
\newblock \emph{J. Mach. Learn. Res.}, 22:\penalty0 7--1, 2021.

\bibitem[Besbes et~al.(2015)Besbes, Gur, and Zeevi]{besbes+15}
Omar Besbes, Yonatan Gur, and Assaf Zeevi.
\newblock Non-stationary stochastic optimization.
\newblock \emph{Operations research}, 63\penalty0 (5):\penalty0 1227--1244,
  2015.

\bibitem[Brost et~al.(2016)Brost, Seldin, Cox, and Lioma]{Brost+16}
Brian Brost, Yevgeny Seldin, Ingemar~J. Cox, and Christina Lioma.
\newblock Multi-dueling bandits and their application to online ranker
  evaluation.
\newblock \emph{CoRR}, abs/1608.06253, 2016.

\bibitem[Bubeck and Slivkins(2012)]{bubeck2012best}
S{\'e}bastien Bubeck and Aleksandrs Slivkins.
\newblock The best of both worlds: Stochastic and adversarial bandits.
\newblock In \emph{Conference on Learning Theory}, pages 42--1. JMLR Workshop
  and Conference Proceedings, 2012.

\bibitem[Busa-Fekete et~al.(2013)Busa-Fekete, Szorenyi, Cheng, Weng, and
  H{\"u}llermeier]{Busa_top}
R{\'o}bert Busa-Fekete, Balazs Szorenyi, Weiwei Cheng, Paul Weng, and Eyke
  H{\"u}llermeier.
\newblock Top-k selection based on adaptive sampling of noisy preferences.
\newblock In \emph{International Conference on Machine Learning}, pages
  1094--1102, 2013.

\bibitem[Busa-Fekete et~al.(2014)Busa-Fekete, Sz{\"o}r{\'e}nyi, and
  H{\"u}llermeier]{Busa_aaai}
R{\'o}bert Busa-Fekete, Bal{\'a}zs Sz{\"o}r{\'e}nyi, and Eyke H{\"u}llermeier.
\newblock P{AC} rank elicitation through adaptive sampling of stochastic
  pairwise preferences.
\newblock In \emph{AAAI}, pages 1701--1707, 2014.

\bibitem[Chen and Frazier(2017)]{WeakDB}
Bangrui Chen and Peter~I Frazier.
\newblock Dueling bandits with weak regret.
\newblock \emph{arXiv preprint arXiv:1706.04304}, 2017.

\bibitem[Chen et~al.(2018)Chen, Li, and Mao]{ChenSoda+18}
Xi~Chen, Yuanzhi Li, and Jieming Mao.
\newblock A nearly instance optimal algorithm for top-k ranking under the
  multinomial logit model.
\newblock In \emph{Proceedings of the Twenty-Ninth Annual ACM-SIAM Symposium on
  Discrete Algorithms}, pages 2504--2522. SIAM, 2018.

\bibitem[Chen et~al.(2019)Chen, Lee, Luo, and Wei]{luo+19}
Yifang Chen, Chung-Wei Lee, Haipeng Luo, and Chen-Yu Wei.
\newblock A new algorithm for non-stationary contextual bandits: Efficient,
  optimal, and parameter-free.
\newblock \emph{In Proceedings of the 32nd Conference on Learning Theory},
  99:\penalty0 1--30, 2019.

\bibitem[Ding et~al.(2013)Ding, Qin, Zhang, and Liu]{budget3}
Wenkui Ding, Tao Qin, Xu-Dong Zhang, and Tie-Yan Liu.
\newblock Multi-armed bandit with budget constraint and variable costs.
\newblock In \emph{Twenty-Seventh AAAI Conference on Artificial Intelligence},
  2013.

\bibitem[Dud{\'\i}k et~al.(2015)Dud{\'\i}k, Hofmann, Schapire, Slivkins, and
  Zoghi]{CDB}
Miroslav Dud{\'\i}k, Katja Hofmann, Robert~E Schapire, Aleksandrs Slivkins, and
  Masrour Zoghi.
\newblock Contextual dueling bandits.
\newblock In \emph{Conference on Learning Theory}, pages 563--587, 2015.

\bibitem[Falahatgar et~al.(2017)Falahatgar, Hao, Orlitsky, Pichapati, and
  Ravindrakumar]{falahatgar_nips}
Moein Falahatgar, Yi~Hao, Alon Orlitsky, Venkatadheeraj Pichapati, and Vaishakh
  Ravindrakumar.
\newblock Maxing and ranking with few assumptions.
\newblock In \emph{Advances in Neural Information Processing Systems}, pages
  7063--7073, 2017.

\bibitem[Falahatgar et~al.(2018)Falahatgar, Jain, Orlitsky, Pichapati, and
  Ravindrakumar]{falahatgar2}
Moein Falahatgar, Ayush Jain, Alon Orlitsky, Venkatadheeraj Pichapati, and
  Vaishakh Ravindrakumar.
\newblock The limits of maxing, ranking, and preference learning.
\newblock In \emph{International Conference on Machine Learning}, pages
  1427--1436. PMLR, 2018.

\bibitem[Gajane et~al.(2015)Gajane, Urvoy, and Cl{\'e}rot]{Adv_DB}
Pratik Gajane, Tanguy Urvoy, and Fabrice Cl{\'e}rot.
\newblock A relative exponential weighing algorithm for adversarial
  utility-based dueling bandits.
\newblock In \emph{Proceedings of the 32nd International Conference on Machine
  Learning}, pages 218--227, 2015.

\bibitem[Gupta et~al.(2019)Gupta, Koren, and Talwar]{gupta2019better}
Anupam Gupta, Tomer Koren, and Kunal Talwar.
\newblock Better algorithms for stochastic bandits with adversarial
  corruptions.
\newblock In \emph{Conference on Learning Theory}, pages 1562--1578. PMLR,
  2019.

\bibitem[Gupta and Saha(2021)]{gupta2021optimal}
Shubham Gupta and Aadirupa Saha.
\newblock Optimal and efficient dynamic regret algorithms for non-stationary
  dueling bandits.
\newblock \emph{arXiv preprint arXiv:2111.03917}, 2021.

\bibitem[Immorlica et~al.(2019)Immorlica, Sankararaman, Schapire, and
  Slivkins]{budget1}
Nicole Immorlica, Karthik~Abinav Sankararaman, Robert Schapire, and Aleksandrs
  Slivkins.
\newblock Adversarial bandits with knapsacks.
\newblock In \emph{2019 IEEE 60th Annual Symposium on Foundations of Computer
  Science (FOCS)}, pages 202--219. IEEE, 2019.

\bibitem[Jamieson et~al.(2015)Jamieson, Katariya, Deshpande, and
  Nowak]{SparseDB}
Kevin Jamieson, Sumeet Katariya, Atul Deshpande, and Robert Nowak.
\newblock Sparse dueling bandits.
\newblock In \emph{Artificial Intelligence and Statistics}, pages 416--424.
  PMLR, 2015.

\bibitem[Kanade et~al.(2009)Kanade, McMahan, and Bryan]{kanade09}
Varun Kanade, H~Brendan McMahan, and Brent Bryan.
\newblock Sleeping experts and bandits with stochastic action availability and
  adversarial rewards.
\newblock 2009.

\bibitem[Komiyama et~al.(2015)Komiyama, Honda, Kashima, and
  Nakagawa]{Komiyama+15}
Junpei Komiyama, Junya Honda, Hisashi Kashima, and Hiroshi Nakagawa.
\newblock Regret lower bound and optimal algorithm in dueling bandit problem.
\newblock In \emph{COLT}, pages 1141--1154, 2015.

\bibitem[Komiyama et~al.(2016)Komiyama, Honda, and Nakagawa]{Komiyama+16}
Junpei Komiyama, Junya Honda, and Hiroshi Nakagawa.
\newblock Copeland dueling bandit problem: Regret lower bound, optimal
  algorithm, and computationally efficient algorithm.
\newblock \emph{arXiv preprint arXiv:1605.01677}, 2016.

\bibitem[Kumagai(2017)]{ContDB}
Wataru Kumagai.
\newblock Regret analysis for continuous dueling bandit.
\newblock In \emph{Advances in Neural Information Processing Systems}, 2017.

\bibitem[Lattimore and Szepesv{\'a}ri(2019)]{pm1}
Tor Lattimore and Csaba Szepesv{\'a}ri.
\newblock An information-theoretic approach to minimax regret in partial
  monitoring.
\newblock In \emph{Conference on Learning Theory}, pages 2111--2139. PMLR,
  2019.

\bibitem[Lin et~al.(2014)Lin, Abrahao, Kleinberg, Lui, and Chen]{pm3}
Tian Lin, Bruno Abrahao, Robert Kleinberg, John Lui, and Wei Chen.
\newblock Combinatorial partial monitoring game with linear feedback and its
  applications.
\newblock In \emph{International Conference on Machine Learning}, pages
  901--909. PMLR, 2014.

\bibitem[Lykouris et~al.(2018)Lykouris, Mirrokni, and
  Paes~Leme]{lykouris2018stochastic}
Thodoris Lykouris, Vahab Mirrokni, and Renato Paes~Leme.
\newblock Stochastic bandits robust to adversarial corruptions.
\newblock In \emph{Proceedings of the 50th Annual ACM SIGACT Symposium on
  Theory of Computing}, pages 114--122, 2018.

\bibitem[Mannor et~al.(2014)Mannor, Perchet, and Stoltz]{pm2}
Shie Mannor, Vianney Perchet, and Gilles Stoltz.
\newblock Set-valued approachability and online learning with partial
  monitoring.
\newblock \emph{The Journal of Machine Learning Research}, 15\penalty0
  (1):\penalty0 3247--3295, 2014.

\bibitem[Negahban et~al.(2017)Negahban, Oh, and Shah]{shah17}
Sahand Negahban, Sewoong Oh, and Devavrat Shah.
\newblock Rank centrality: Ranking from pairwise comparisons.
\newblock \emph{Operations Research}, 65\penalty0 (1):\penalty0 266--287, 2017.

\bibitem[Neu and Valko(2014)]{neu14}
Gergely Neu and Michal Valko.
\newblock Online combinatorial optimization with stochastic decision sets and
  adversarial losses.
\newblock In \emph{Advances in Neural Information Processing Systems}, pages
  2780--2788, 2014.

\bibitem[Ng et~al.(2006)Ng, Coates, Diel, Ganapathi, Schulte, Tse, Berger, and
  Liang]{ng06}
Andrew~Y Ng, Adam Coates, Mark Diel, Varun Ganapathi, Jamie Schulte, Ben Tse,
  Eric Berger, and Eric Liang.
\newblock Autonomous inverted helicopter flight via reinforcement learning.
\newblock In \emph{Experimental robotics IX}, pages 363--372. Springer, 2006.

\bibitem[Niranjan and Rajkumar(2017)]{niranjan2017}
UN~Niranjan and Arun Rajkumar.
\newblock Inductive pairwise ranking: going beyond the n log (n) barrier.
\newblock In \emph{Proceedings of the AAAI Conference on Artificial
  Intelligence}, volume~31, 2017.

\bibitem[Pike-Burke et~al.(2018)Pike-Burke, Agrawal, Szepesvari, and
  Grunewalder]{delay2}
Ciara Pike-Burke, Shipra Agrawal, Csaba Szepesvari, and Steffen Grunewalder.
\newblock Bandits with delayed, aggregated anonymous feedback.
\newblock In \emph{International Conference on Machine Learning}, pages
  4105--4113, 2018.

\bibitem[Ren et~al.(2018)Ren, Liu, and Shroff]{Ren+18}
Wenbo Ren, Jia Liu, and Ness~B Shroff.
\newblock P{AC} ranking from pairwise and listwise queries: Lower bounds and
  upper bounds.
\newblock \emph{arXiv preprint arXiv:1806.02970}, 2018.

\bibitem[Saha(2021)]{S21}
Aadirupa Saha.
\newblock Optimal algorithms for stochastic contextual dueling bandits.
\newblock In \emph{Advances in Neural Information Processing Systems}, 2021.

\bibitem[Saha and Gaillard(2021)]{SG21dbaa}
Aadirupa Saha and Pierre Gaillard.
\newblock Dueling bandits with adversarial sleeping.
\newblock \emph{Advances in Neural Information Processing Systems}, 34, 2021.

\bibitem[Saha and Gopalan(2018{\natexlab{a}})]{SG18}
Aadirupa Saha and Aditya Gopalan.
\newblock Battle of bandits.
\newblock In \emph{Uncertainty in Artificial Intelligence}, 2018{\natexlab{a}}.

\bibitem[Saha and Gopalan(2018{\natexlab{b}})]{SGrank18}
Aadirupa Saha and Aditya Gopalan.
\newblock Active ranking with subset-wise preferences.
\newblock \emph{International Conference on Artificial Intelligence and
  Statistics (AISTATS)}, 2018{\natexlab{b}}.

\bibitem[Saha and Gopalan(2019{\natexlab{a}})]{SG19}
Aadirupa Saha and Aditya Gopalan.
\newblock Combinatorial bandits with relative feedback.
\newblock In \emph{Advances in Neural Information Processing Systems},
  2019{\natexlab{a}}.

\bibitem[Saha and Gopalan(2019{\natexlab{b}})]{SGwin18}
Aadirupa Saha and Aditya Gopalan.
\newblock {PAC Battling Bandits in the Plackett-Luce Model}.
\newblock In \emph{Algorithmic Learning Theory}, pages 700--737,
  2019{\natexlab{b}}.

\bibitem[Saha and Gopalan(2020)]{SGinst20}
Aadirupa Saha and Aditya Gopalan.
\newblock From pac to instance-optimal sample complexity in the plackett-luce
  model.
\newblock In \emph{International Conference on Machine Learning}, pages
  8367--8376. PMLR, 2020.

\bibitem[Saha and Krishnamurthy(2021)]{SK21}
Aadirupa Saha and Akshay Krishnamurthy.
\newblock Efficient and optimal algorithms for contextual dueling bandits under
  realizability.
\newblock \emph{arXiv preprint arXiv:2111.12306}, 2021.

\bibitem[Saha et~al.(2021)Saha, Koren, and Mansour]{ADB}
Aadirupa Saha, Tomer Koren, and Yishay Mansour.
\newblock Adversarial dueling bandits.
\newblock In \emph{International Conference on Machine Learning}, pages
  9235--9244. PMLR, 2021.

\bibitem[Sui et~al.(2018)Sui, Zoghi, Hofmann, and Yue]{sui2018advancements}
Yanan Sui, Masrour Zoghi, Katja Hofmann, and Yisong Yue.
\newblock Advancements in dueling bandits.
\newblock In \emph{IJCAI}, pages 5502--5510, 2018.

\bibitem[Sz{o}r{e}nyi et~al.(2015)Sz{o}r{e}nyi, Busa-Fekete, Paul, and
  H{u}llermeier]{Busa_pl}
Bal{a}zs Sz{o}r{e}nyi, R{o}bert Busa-Fekete, Adil Paul, and Eyke H{u}llermeier.
\newblock Online rank elicitation for plackett-luce: A dueling bandits
  approach.
\newblock In \emph{Advances in Neural Information Processing Systems}, pages
  604--612, 2015.

\bibitem[Talebi and Maillard(2018)]{talebi18}
Mohammad~Sadegh Talebi and Odalric-Ambrym Maillard.
\newblock Variance-aware regret bounds for undiscounted reinforcement learning
  in mdps.
\newblock \emph{arXiv preprint arXiv:1803.01626}, 2018.

\bibitem[Thune et~al.(2019)Thune, Cesa-Bianchi, and Seldin]{delay3}
Tobias~Sommer Thune, Nicolo Cesa-Bianchi, and Yevgeny Seldin.
\newblock Nonstochastic multiarmed bandits with unrestricted delays.
\newblock In \emph{Advances in Neural Information Processing Systems}, pages
  6541--6550, 2019.

\bibitem[Vernade et~al.(2018)Vernade, Carpentier, Lattimore, Zappella, Ermis,
  and Brueckner]{delay1}
Claire Vernade, Alexandra Carpentier, Tor Lattimore, Giovanni Zappella, Beyza
  Ermis, and Michael Brueckner.
\newblock Linear bandits with stochastic delayed feedback.
\newblock \emph{arXiv preprint arXiv:1807.02089}, 2018.

\bibitem[Wei and Luo(2021)]{wei21}
Chen-Yu Wei and Haipeng Luo.
\newblock Non-stationary reinforcement learning without prior knowledge: An
  optimal black-box approach.
\newblock \emph{In Proceedings of the 32nd International Conference on Learning
  Theory}, 2021.

\bibitem[Wu and Liu(2016)]{DTS}
Huasen Wu and Xin Liu.
\newblock Double {T}hompson sampling for dueling bandits.
\newblock In \emph{Advances in Neural Information Processing Systems}, pages
  649--657, 2016.

\bibitem[Xie et~al.(2020)Xie, Chen, Wang, and Yang]{xie+20}
Qiaomin Xie, Yudong Chen, Zhaoran Wang, and Zhuoran Yang.
\newblock Learning zero-sum simultaneous-move markov games using function
  approximation and correlated equilibrium.
\newblock In \emph{Conference on Learning Theory}, 2020.

\bibitem[Yue and Joachims(2009)]{Yue+09}
Yisong Yue and Thorsten Joachims.
\newblock Interactively optimizing information retrieval systems as a dueling
  bandits problem.
\newblock In \emph{Proceedings of the 26th Annual International Conference on
  Machine Learning}, pages 1201--1208. ACM, 2009.

\bibitem[Yue and Joachims(2011)]{BTM}
Yisong Yue and Thorsten Joachims.
\newblock Beat the mean bandit.
\newblock In \emph{Proceedings of the 28th International Conference on Machine
  Learning (ICML-11)}, pages 241--248, 2011.

\bibitem[Yue et~al.(2012)Yue, Broder, Kleinberg, and Joachims]{Yue+12}
Yisong Yue, Josef Broder, Robert Kleinberg, and Thorsten Joachims.
\newblock The $k$-armed dueling bandits problem.
\newblock \emph{Journal of Computer and System Sciences}, 78\penalty0
  (5):\penalty0 1538--1556, 2012.

\bibitem[Zhou and Tomlin(2018)]{budget2}
Datong~P Zhou and Claire~J Tomlin.
\newblock Budget-constrained multi-armed bandits with multiple plays.
\newblock In \emph{Thirty-Second AAAI Conference on Artificial Intelligence},
  2018.

\bibitem[Zimmert and Seldin(2020)]{zimmert2020optimal}
Julian Zimmert and Yevgeny Seldin.
\newblock An optimal algorithm for adversarial bandits with arbitrary delays.
\newblock In \emph{International Conference on Artificial Intelligence and
  Statistics}, pages 3285--3294. PMLR, 2020.

\bibitem[Zimmert and Seldin(2021)]{zimmert2021tsallis}
Julian Zimmert and Yevgeny Seldin.
\newblock Tsallis-inf: An optimal algorithm for stochastic and adversarial
  bandits.
\newblock \emph{J. Mach. Learn. Res.}, 22:\penalty0 28--1, 2021.

\bibitem[Zoghi et~al.(2014{\natexlab{a}})Zoghi, Whiteson, Munos, Rijke,
  et~al.]{Zoghi+14RUCB}
Masrour Zoghi, Shimon Whiteson, Remi Munos, Maarten~de Rijke, et~al.
\newblock Relative upper confidence bound for the $k$-armed dueling bandit
  problem.
\newblock In \emph{JMLR Workshop and Conference Proceedings}, number~32, pages
  10--18. JMLR, 2014{\natexlab{a}}.

\bibitem[Zoghi et~al.(2014{\natexlab{b}})Zoghi, Whiteson, De~Rijke, and
  Munos]{Zoghi+14RCS}
Masrour Zoghi, Shimon~A Whiteson, Maarten De~Rijke, and Remi Munos.
\newblock Relative confidence sampling for efficient on-line ranker evaluation.
\newblock In \emph{Proceedings of the 7th ACM international conference on Web
  search and data mining}, pages 73--82. ACM, 2014{\natexlab{b}}.

\bibitem[Zoghi et~al.(2015{\natexlab{a}})Zoghi, Karnin, Whiteson, and
  De~Rijke]{Zoghi+15}
Masrour Zoghi, Zohar~S Karnin, Shimon Whiteson, and Maarten De~Rijke.
\newblock Copeland dueling bandits.
\newblock In \emph{Advances in Neural Information Processing Systems}, pages
  307--315, 2015{\natexlab{a}}.

\bibitem[Zoghi et~al.(2015{\natexlab{b}})Zoghi, Whiteson, and
  de~Rijke]{Zoghi+15MRUCB}
Masrour Zoghi, Shimon Whiteson, and Maarten de~Rijke.
\newblock Mergerucb: A method for large-scale online ranker evaluation.
\newblock In \emph{Proceedings of the Eighth ACM International Conference on
  Web Search and Data Mining}, pages 17--26. ACM, 2015{\natexlab{b}}.

\end{thebibliography}

\newpage
\appendix
\clearpage

\section*{\centering\large{Supplementary for \papertitle}}
\vspace*{1cm}
	
\section{Regret analysis of Algorithm~\ref{alg:rr}}

\label{app:proofs}
\thmrr*

\begin{proof}[Proof of Theorem~\ref{thm:rr}]
Let us denote by 
\[
u_{ij}(t) := \hp_{ij}(t)+c_{ij}(t)
\]
for any pair $(i,j)$ and time $t$, where 
\[
    c_{ij}(t):=\sqrt{\frac{ \log (Kt/\delta)}{n_{ij}(t)}} \,,
\]
and assume $\Delta_2 \leq \Delta_3 \leq \cdots \leq \Delta_K$ without loss of generality.
We will also assume the confidence bounds of Lem. \ref{lem:conf} holds good for all $t \in [T]$ and all pairs $(i,j)$, which is shown to hold good with probability at least $1-\delta$. In particular, this implies that the best arm cannot be eliminated, i.e., $1\in \cA_t$ for all $t\geq 1$. 

We start by noting that if the worst arm $K$ (since $\Delta_K = \max_{i = 2}^{K}\Delta_i$, arm-K gets maximally beaten by the CW) is played at time $t$, it means $u_{K1}(t) \geq \frac{1}{2}$.
However we also have,
\begin{align*}
u_{K1}(t) &= \hp_{K1}(t) + c_{K1}(t)\\
 & \leq p_{K1} + 2c_{K1}(t) = 1/2 - \Delta_K + 2c_{K1}(t),
\end{align*}
where the inequality holds due to Lem. \ref{lem:conf} and the last equality holds by noting $p_{K1} = 1-p_{1K} = 1 - (1/2 + \Delta_K) = 1/2 - \Delta_K$.

So $u_{K1}(t) > 1/2$ can only hold good if $c_{K1}(t) > \Delta_K/2$ which implies, 
\begin{align}
	\label{eq:rr1}
& n_{K1}(t) \leq \frac{4 \log (Kt/\delta)}{\Delta_K^2}.
\end{align}

But by the our algorithm design since all the pairs are drawn in a round robin fashion, at any round $t \in [T]$, for any two distinct pairs $(i,j)$ and $(i',j')$ that are in $\cA_t$ note that 
\begin{align}
	\label{eq:rr0}
|n_{ij}(t) - n_{i'j'}(t)| \leq 1\,.
\end{align}
Thus the total regret incurred by Alg. \ref{alg:rr} at rounds where $K \in \{k_{+1,t},k_{-1,t}\}$, can be upper bounded as: 
\begin{align*}
\sum_{t=1}^T\sum_{k < K} & \1(\{k_{+1,t},k_{-1,t}\} = \{k,K\}) \frac{\Delta_{K} + \Delta_k}{2} \\
& \leq \sum_{k=1}^{K-1} n_{kK}(T) \Delta_K \leq (K-1) (1+n_{K1}(T)) \Delta_K \\
& \leq  (K-1) \Big(1+ \frac{4 \log (Kt/\delta)}{\Delta_K^2}\Big) \Delta_K \\ 
& = (K-1) \Big(\Delta_k + \frac{4 \log (Kt/\delta)}{\Delta_K}\Big) \,.
\end{align*}

Similarly, note for any $i \in \{2,3,\ldots K-1\}$, we can upper bound the regret of rounds where $i$ was played in the duel as:
\begin{align*}
	\sum_{t=1}^T\sum_{k < i} & \1(\{k_{+1,t},k_{-1,t}\} = \{k,i\}) \frac{\Delta_{i} + \Delta_k}{2} \\
	& \leq \sum_{k=1}^{i-1} n_{ki}(T) \Delta_i \leq (i-1) (1+n_{1i}(T)) \Delta_i \\
	& \leq (i-1) \Big(\Delta_i + \frac{4 \log (Kt/\delta)}{\Delta_i}\Big)\,.
\end{align*}

Thus we can bound the total regret of Algorithm \ref{alg:rr} as:

\begin{align*}
	\sum_{t=1}^T\sum_{i = 2}^K \sum_{k = 1}^{i-1} & \1(\{k_{+1,t},k_{-1,t}\} = \{k,i\}) \frac{\Delta_{i} + \Delta_k}{2} \\
	& \leq \sum_{i = 2}^K  (i-1) \Big(\Delta_i + \frac{4 \log (Kt/\delta)}{\Delta_i}\Big) \\
	& \stackrel{(\Delta_i\leq 1/2)}{\leq} \frac{K^2}{4} + 4 \sum_{i = 2}^K  (i-1)  \frac{ \log (Kt/\delta) }{\Delta_i}
\end{align*}
which concludes the first half of the proof.
Further, to show the second part of the claim (analyzing worst-case gap-independent regret bound of Algorithm \ref{alg:rr}), note that Eqn. \eqref{eq:rr1} equivalently implies for any $i \in [K]\sm \{1\}$:
\begin{align*}
\Delta_i \leq \sqrt{\frac{4 \log (Kt/\delta)}{n_{i1}(t)}} \,.
\end{align*}
Hence we can alternatively upper bound the regret as:
\begin{align*}
R_T &=  \sum_{i=2}^K \sum_{k=1}^{i-1} n_{ik}(T) \frac{\Delta_j + \Delta_k}{2} \leq \sum_{i=2}^K \sum_{k=1}^{i-1} n_{ik}(T) \Delta_i\\
& \leq \sum_{i=2}^K \sum_{k=1}^{i-1} n_{ik}(T) \sqrt{\frac{4 \log (KT/\delta)}{n_{i1}(T)}} \\ 
& \overset{(a)}{\leq} 2 \sum_{i=2}^K \sum_{k=1}^{i-1} \sqrt{2 n_{ik}(T) \log (Kt/\delta)} \\
& \overset{(b)}{\leq} \sum_{i=2}^{K} 2\sqrt{K^2  \sum_{i=2}^K \sum_{k=1}^{i-1} n_{ik}(T)\log(KT/\delta) }\\
&\leq 2K \sqrt{ T \log(KT/\delta)}\,,
\end{align*}
where $(a)$ follows from the observation of Eqn.~\eqref{eq:rr0} which implies $n_{i1}(T) \geq n_{ik}(T)$ when $T \geq K^2$ and $(b)$ from Jensen's inequality and $\sum_{i=2}^K(i-1) \leq K^2/2$.
\end{proof}

\begin{restatable}[]{lem}{lemconf}
	\label{lem:conf}
	For any $\delta \in (0,1/2)$. Then, with probability at least $1-\delta$, for any pair $i,j \in [K]$ and any time $t \in [T]$%
	\[
	\hp_{ij}(t)-c_{ij}(t) \leq p_{ij} \leq \hp_{ij}(t)+c_{ij}(t),  \qquad \forall t \in [T] ,
	\]
	where $c_{ij}(t):=\sqrt{\frac{\log (Kt/\delta)}{n_{ij}(t)}}$.
\end{restatable}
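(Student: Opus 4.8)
The plan is to prove a stronger, time-\emph{independent} concentration statement and then transfer it to the time-uniform bound that is claimed. The key preliminary observation is that a pair can be drawn at most once per round, so $n_{ij}(t) \leq t$ for every $t$; consequently $\log(Kt/\delta) \geq \log(K n_{ij}(t)/\delta)$ and therefore $c_{ij}(t) \geq \sqrt{\log(K n_{ij}(t)/\delta)/n_{ij}(t)}$. Because the statement for the ordered pair $(j,i)$ is equivalent to that for $(i,j)$ (the radius is symmetric since $n_{ij}=n_{ji}$, and $|\hp_{ij}-p_{ij}| = |\hp_{ji}-p_{ji}|$), it suffices to establish that, with probability at least $1-\delta$, for every unordered pair $\{i,j\}$ and every draw-count $n\geq 1$,
\[
  \Big| \tfrac{1}{n}\textstyle\sum_{m=1}^{n} Y^{ij}_m - p_{ij} \Big| \leq \sqrt{\tfrac{\log(Kn/\delta)}{n}},
\]
where $Y^{ij}_1, Y^{ij}_2, \dots$ denotes the outcomes observed on successive draws of the pair $\{i,j\}$.

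First I would justify that, conditionally on the pair $\{i,j\}$ being selected, the outcome is $\mathrm{Ber}(p_{ij})$ independently of the history, because the preference matrix is fixed in the stochastic regime. This lets me couple the observations with a fixed i.i.d.\ $\mathrm{Ber}(p_{ij})$ sequence $(Y^{ij}_m)_{m\geq 1}$: the $m$-th draw of $\{i,j\}$ reveals $Y^{ij}_m$, so that $\hp_{ij}(t) = \frac{1}{n_{ij}(t)}\sum_{m=1}^{n_{ij}(t)} Y^{ij}_m$. The point of the coupling is that the displayed event above is a statement about a \emph{fixed} i.i.d.\ sequence, so I can treat each count $n$ separately and union-bound over $n$, sidestepping the fact that $n_{ij}(t)$ is data-dependent.

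Next I would apply Hoeffding's inequality for each fixed pair and fixed $n$: since $S_n^{ij} = \sum_{m=1}^n Y^{ij}_m$ is a sum of $n$ i.i.d.\ $[0,1]$-valued variables with mean $p_{ij}$,
\[
  \Pr\Big( \big| \tfrac{1}{n} S_n^{ij} - p_{ij}\big| > \sqrt{\tfrac{\log(Kn/\delta)}{n}}\Big) \leq 2 e^{-2\log(Kn/\delta)} = \frac{2\delta^2}{K^2 n^2}.
\]
Finally I would take a union bound over the at most $\binom{K}{2} < K^2/2$ unordered pairs and over all $n \geq 1$, using $\sum_{n\geq 1} n^{-2} = \pi^2/6$, which yields a total failure probability at most $\frac{K^2}{2}\cdot\frac{2\delta^2}{K^2}\cdot\frac{\pi^2}{6} = \frac{\pi^2}{6}\delta^2 \leq \delta$, where the last inequality uses $\delta < 1/2 < 6/\pi^2$. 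On the complementary event, the reduction in the first paragraph gives $|\hp_{ij}(t)-p_{ij}| \le c_{ij}(t)$ for all $t$ and all pairs.

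The main obstacle, and the only genuinely non-routine step, is the adaptivity of the sampling: $n_{ij}(t)$ is determined by the realized outcomes, so one cannot naively apply Hoeffding treating $n_{ij}(t)$ as a fixed sample size. The two ingredients that dissolve this difficulty are (i) the deterministic inequality $n_{ij}(t) \leq t$, which converts the time-uniform radius $c_{ij}(t)$ into a count-dependent radius, and (ii) the i.i.d.\ coupling, which makes the count-uniform bound a statement about a fixed i.i.d.\ sequence amenable to a plain union bound over $n$. Everything after that is a routine Hoeffding-plus-union-bound computation.
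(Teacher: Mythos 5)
Your proof is correct, and it takes a route that differs from the paper's in one substantive way. The paper's proof applies Hoeffding's inequality directly to $\hp_{ij}(t)$ at each fixed time $t$, treating the random count $n_{ij}(t)$ as a fixed sample size, and then union bounds over the $\binom{K}{2}$ pairs and over $t \in [T]$, obtaining per-event probability $2\delta^2/(K^2t^2)$ and total failure probability at most $\delta\pi^2/12 \le \delta$. That argument glosses over exactly the adaptivity issue you single out: $\hp_{ij}(t)$ is an average of a random, history-dependent number of observations, so plain Hoeffding does not apply verbatim at a fixed $t$; moreover, the naive repair --- additionally union bounding over the possible values $n \le t$ of the count while keeping the radius $\sqrt{\log(Kt/\delta)/n}$ --- costs an extra factor of $t$ per time step and yields a total failure probability of order $\delta^2 \log T$, which would break the claimed $T$-independent constant. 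Your fix --- couple each pair with a fixed i.i.d.\ Bernoulli sequence, prove a count-uniform bound with radius $\sqrt{\log(Kn/\delta)/n}$ by union bounding over $n \ge 1$ and over pairs, then transfer to the time-uniform claim via the deterministic inequality $n_{ij}(t) \le t$ (so that $\log(K n_{ij}(t)/\delta) \le \log(Kt/\delta)$) --- is the standard rigorous treatment, and it recovers the same constants ($\pi^2\delta^2/6 \le \delta$ for $\delta < 1/2$). In short: the paper buys brevity at the cost of a (fixable) rigor gap; your version closes that gap at no cost in the final bound. One small omission: you should record the trivial case $n_{ij}(t) = 0$, where the paper's convention $x/0 = +\infty$ makes $c_{ij}(t) = +\infty$ and the inequality vacuous --- the paper's proof handles this explicitly, whereas your transfer step silently assumes $n_{ij}(t)\ge 1$.
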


\begin{proof}
Let us denote by $u_{ij}(t) := \hp_{ij}(t)+c_{ij}(t)$ and $ \ell_{ij}(t) := \hp_{ij}(t)+c_{ij}(t)$.
	Note the inequality holds trivially at round $t$, for any pair $(i,j)$ for which $n_{ij}(t) = 0$ since in these cases $\ell_{ij}(t) \leq 0$ and $u_{ij}(t) \geq 1$.
	
	Now consider any pair $(i,j)$ and round $t \in [T]$ such that $n_{ij}(t) > 0$. Note in this case by Hoeffding's Inequality: 
	\begin{multline*}
	Pr  \Bigg(|p_{ij}-\hat p_{ij}(t) | > \sqrt{\frac{\ln (Kt/\delta)}{n_{ij}(t)}} \Bigg) \\
	 \leq 2e^{-2n_{ij}(t)\frac{\ln (Kt/\delta)}{n_{ij}(t)}} = \frac{2\delta^2}{K^2t^2} \leq \frac{\delta}{K^2t^2}\,.
	\end{multline*}
	Taking union bound over all $K \choose 2$ pairs and time $t \in [T]$ we get:
	\begin{align*}
		Pr & \Bigg(\exists i,j \in [K], t \in [T] \text{ s.t. } |p_{ij}-\hat p_{ij}(t) | > \sqrt{\frac{\ln (Kt/\delta)}{n_{ij}(t)}} \Bigg) \\
		& \leq \sum_{t = 1}^T \sum_{i=2}^{K}\sum_{j = 1}^{i} \frac{\delta}{K^2t^2} \leq \sum_{t = 1}^\infty \frac{\delta}{2t^2} \leq \frac{\delta\pi^2}{12} \leq \delta,
	\end{align*}
	 where in the second last inequality we used $\sum_{t = 1}^\infty \frac{1}{t^2} < \frac{\pi^2}{6}$. This concludes the claim.
\end{proof}


\section{Regret Analysis of Alg. \ref{alg:bdb}}
\label{sec:reg_analysis}

\mainthm*

\begin{proof}[Proof of Theorem~\ref{thm:mainthm}]


The analysis follows from carefully combining our reduction (Theorem~\ref{thm:reduction}) with Theorem~1 of~\cite{zimmert2021tsallis} for MAB to both of the players. Indeed, for each player $i\in \{-1,1\}$, Algorithm~\ref{alg:bdb} chooses $k_{i,t}$ by following the decisions of Tsallis-INF \citep[Alg.~1]{zimmert2021tsallis} with $\alpha = 1/2$, symmetric regularization, learning rate $\eta_t = 4/\sqrt{t}$ and losses $\ell_{i,t}$ estimated in~\eqref{eq:hat_g} with standard importance sampling (IW).

\paragraph{Adversarial regime} A direct application of Theorem~1 of \cite{zimmert2021tsallis}, upper-bounds the pseudo-regret for each player $i \in \{-1,1\}$ as
\begin{equation*}
  \max_{k \in [K]} \E\big[R_{i,T}(k)\big]  \leq 4 \sqrt{KT} + 1 \,.
\end{equation*}
Combining the about bounds with the reduction from MAB to DB  (Theorem~\ref{thm:reduction}) yields the adversarial pseudo-regret upper-bound
\[
  \E\big[ R_T(k)\big] = \frac{1}{2}\E\big[ R_{-1,T}(k) + R_{+1,T}(k)\big] \leq 4\sqrt{KT} + 1.
\]

\paragraph{Adversarial regime with a self-bounding constraint} Our self-bounding constraint is slightly different from that of  \cite{zimmert2021tsallis}, since it involves both players simultaneously. This is necessary so that our gap vector $\Delta$ can recover the standard suboptimality gaps used in stochastic dueling bandits.
Thus, we cannot directly combine their result with our black-box reduction  in this case. However, the proof largely follows their analysis, except that the upper-bounds on the regret of both players must be combined in the middle of their analysis, just before they apply their self-bounding constraint assumption. Thus, we give here only the modification to the proof of their Theorem~1. 

Following their proof of Thm. 1 until their pseudo-regret bound at the top of p. 23, we get for each player $i \in \cI:= \{-1,+1\}$:
\begin{align*}
  \E & \big[R_{i,T}(k)]\\
   &  \leq \sum_{k \neq k^*} \left(\sum_{t=1}^{T} \frac{\sqrt{\E[p_{i,t}(k)]}}{\sqrt{t}}
   + \sum_{t=T_0+1}^T \frac{\E[p_{i,t}(k)]}{4\sqrt{t}}\right) + M  \,,
\end{align*}
where $M:=  \sqrt{T_0} + \frac{3}{4}\sqrt{K} + 15 + 14 K \log(T)$ and $T_0 := \lceil \Delta_{\min}^{-2}/4\rceil$. Together with Theorem~\ref{thm:reduction} and taking the $\max$ over $k$, $\overline{R}_T$ is thus upper-bounded by
\begin{equation*}
  \frac{1}{2} \sum_{i\in \cI} \sum_{k \neq k^*} \left(\sum_{t=1}^{T} \frac{\sqrt{\E[p_{i,t}(k)]}}{\sqrt{t}}
   + \sum_{t=T_0+1}^T \frac{\E[p_{i,t}(k)]}{4\sqrt{t}}\right) + M   \,.
\end{equation*}

Now, applying the self-bounding property~\eqref{eq:self_bounded} we get for any $\lambda \in [0,1]$
\begin{align*}
  \overline{R}_T  \leq \overline{R}_T + \lambda \bigg( \overline{R}_T - \frac{1}{2} \E\bigg[ \sum_{t=1}^{T}  \sum_{k \neq k^*} \big(p_{+1,t}(k) + p_{-1,t}(k)\big) \Delta_k \bigg] + C\bigg)
\end{align*}
Thus, combined with the previous bound using $1+\lambda \leq 2$
\begin{align*}
  &\overline{R}_T 
     \leq \frac{1}{2} \sum_{i\in \cI}\sum_{k \neq k^*} \Bigg(\sum_{t=1}^{T} \bigg( \frac{2\sqrt{\E[p_{i,t}(k)]}}{\sqrt{t}} - \lambda \Delta_k \E[p_{i,t}]\bigg) \\
    & \hspace*{3cm}  + \sum_{t=T_0+1}^T \frac{\E[p_{i,t}(k)]}{2\sqrt{t}}  \Bigg) + 2M  + \lambda C \,. \\
    & \leq \sum_{k\neq k^*}  \Bigg(\sum_{t=1}^{T_0} \max_{z \geq 0} \Big\{\frac{2\sqrt{z}}{\sqrt{t}} - \lambda \Delta_k z\Big\}  \\
    & \hspace*{.5cm} + \sum_{t=T_0+1}^T \max_{z\geq 0} \Big\{\frac{2\sqrt{z}+\frac{1}{2} z}{\sqrt{t}} -\lambda \Delta_i z\Big\}   \Bigg) + 2M + \lambda C
\end{align*}
Now, we are back with the same upper-bound \cite{zimmert2021tsallis} have in the middle of their page 23. Following their analysis by solving the optimization problems, summing over $t$, and optimizing $\lambda$ concludes.
\end{proof}


\end{document}